\documentclass[10pt]{article} %
\usepackage{times}

\usepackage{amsmath,amsfonts,bm}

\def\eqref#1{equation~\ref{#1}}

\def\1{\bm{1}}

\DeclareMathAlphabet{\mathsfit}{\encodingdefault}{\sfdefault}{m}{sl}
\SetMathAlphabet{\mathsfit}{bold}{\encodingdefault}{\sfdefault}{bx}{n}

\newcommand{\pdata}{p_{\rm{data}}}

\newcommand{\E}{\mathbb{E}}

\usepackage{algpseudocode}

\usepackage{listings}
\usepackage{amsmath}
\usepackage{amssymb}
\usepackage{mathtools}
\usepackage{amsthm}
\usepackage{algcompatible}

\def\pdata{P_{\text{data}}}
\def\pdatan{\widehat{P}_{\text{data}}}
\def\E{\mathbb{E}}
\def\Rn{\widehat{R}_n}
\def\Rdron{\widehat{R}_{D, n}}

\usepackage{amssymb}%
\usepackage{pifont}%

\usepackage{hyperref}
\usepackage[capitalize,noabbrev]{cleveref}
\usepackage[accepted]{tmlr}

\usepackage{microtype}
\usepackage{graphicx}
\usepackage{subfigure}
\usepackage{booktabs} %
\usepackage{enumitem}

\usepackage{algorithm}%
\usepackage{algpseudocode}%

\theoremstyle{plain}
\newtheorem{theorem}{Theorem}[section]
\newtheorem{proposition}[theorem]{Proposition}

\theoremstyle{definition}

\theoremstyle{remark}

\usepackage[textsize=tiny]{todonotes}

\newcommand\data[1]{{\normalfont \texttt{#1}}}
\newcommand{\std}[1]{\tiny{$\pm$ #1}}
\newcommand{\stdlarge}[1]{\LARGE{$\pm$ #1}}

\newcommand{\namel}{\textsc{RGD}}
\newcommand{\namet}{\textsc{RGD-RevKL}}

\newcommand{\namee}{\textsc{RGD}}
\newcommand{\namex}{\textsc{RGD-$\chi^2$}}
\usepackage{float}

\usepackage[utf8]{inputenc} %
\usepackage[T1]{fontenc}    %

\usepackage{url}            %
\usepackage{booktabs}       %
\usepackage{amsfonts}       %
\usepackage{nicefrac}       %
\usepackage{microtype}      %
\usepackage{xcolor}         %
\usepackage{xspace}
\usepackage{microtype}
\usepackage{amsmath}
\usepackage{graphicx}
\usepackage{subfigure}
\usepackage{booktabs}

\hypersetup{
    colorlinks=true,
    linkcolor=blue,
    filecolor=magenta,      
    urlcolor=cyan,
    citecolor=blue
}

\title{Stochastic Re-weighted Gradient Descent \\ via Distributionally Robust Optimization}

\author{\name Ramnath Kumar \email ramnathk@google.com \\
      \addr Google Inc. 
      \AND
      \name Kushal Majmundar \email majak@google.com \\
      \addr Google Inc.
      \AND
      \name Dheeraj Nagaraj \email dheerajnagaraj@google.com \\
      \addr Google Inc.
      \AND
      \name Arun Sai Suggala \email arunss@google.com \\
      \addr Google Inc.}

\begin{document}

\maketitle

\begin{abstract}
We present Re-weighted Gradient Descent (RGD), a novel optimization technique that improves the performance of deep neural networks through dynamic sample  re-weighting.  Leveraging insights from distributionally robust optimization (DRO) with Kullback-Leibler divergence, our method dynamically assigns importance weights to training data during each optimization step. RGD is simple to implement, computationally efficient, and compatible with widely used optimizers such as SGD and Adam. We demonstrate the effectiveness of RGD on various learning tasks, including supervised learning, meta-learning, and out-of-domain generalization. Notably, RGD achieves state-of-the-art results on diverse benchmarks, with improvements of \textcolor{blue}{\textbf{+0.7\%}} on DomainBed, \textcolor{blue}{\textbf{+1.44\%}} on tabular classification, \textcolor{blue}{\textbf{+1.94\%}} on GLUE with BERT, and \textcolor{blue}{\textbf{+1.01\%}} on ImageNet-1K with ViT.
\end{abstract}

\section{Introduction}
\label{sec:intro}
Deep neural networks (DNNs) have become essential for solving a wide range of tasks, including image classification, object detection, machine translation, and speech recognition. The most commonly used paradigm for learning DNNs is empirical risk minimization (ERM~\citet{vapnik1999overview}), which aims to identify a network that minimizes the average loss of training data points. Several algorithms, including SGD \citep{nemirovsky1983wiley}, Adam \citep{kingma2014adam}, and Adagrad \citep{duchi2011adaptive}, have been proposed for solving ERM. However, a drawback of ERM is that it weighs all the samples equally, often ignoring the rare and more difficult samples and focusing on the easier and abundant samples. This leads to suboptimal performance on unseen data, especially when the training data is scarce~\citep{namkoong2017variance}. 
Consequently, recent works have developed data re-weighting techniques for improving the performance of ERM. One particularly fruitful approach in this line of work is the framework of Distributionally Robust Optimization (DRO)~\citep{ben2013robust}, 
which assigns higher weights to hard examples, often leading to models with better performance than ERM.

DRO selects the best model while also accounting for various uncertainties in the training data distribution~\citep{ben2013robust}. In particular, DRO treats the data distribution as uncertain and finds models which are robust to perturbations in the data distribution (e.g., removing a small fraction of points, adding random noise to each data point, etc.). This makes the model more robust to noise in the training dataset. For example, in the context of classification, this forces the model to place less emphasis on noisy features and more emphasis on useful and predictive features. Consequently, models optimized using DRO have good generalization guarantees on unseen samples, and good performance on heterogeneous subpopulations in the data~\citep{namkoong2017variance, duchi2018learning}.

Concretely, let $\Theta$ be the model space, $\pdata$ be the data distribution, and $\ell(\theta, z)$ be the loss of point $z$ w.r.t model $\theta$. Unlike ERM which minimizes the average loss $\E_{z\sim\pdata}[\ell(\theta, z)],$ DRO minimizes the following objective
\[
\inf_{\theta}\sup_{P': D(P'||\pdata) \leq \rho} \E_{P'}[\ell(z,\theta)].
\]
Here, $\rho$ is the perturbation radius, and $D$ is a divergence that measures the distance between two probability distributions.
Popular choices for $D$ include $f$-divergences, which are defined as $D_f(Q||P) = \E_{P}[f(dQ/dP)]$ for some convex function $f: \mathbb{R}_+ \to \mathbb{R}$. In this case, \citet{shapiro2017distributionally} derived the following \emph{equivalent} dual formulation of the above objective
\begin{align}
\label{eqn:dual_dro}
    \inf_{\lambda \geq 0} \inf_{\eta \in \mathbb{R}} \E_{z\sim \pdata}\left[\lambda f^*\left(\frac{\ell(\theta, z)-\eta}{\lambda}\right)\right] + \lambda \rho + \eta.
\end{align}
Here, $f^*(s) = \sup_t\{st-f(t)\}$ is the Fenchel conjugate of $f$. This alternative way of expressing DRO shows how it implicitly reweighs samples using the conjugate $f^*$. The seminal works of \citet{duchi2016statistics, namkoong2017variance} studied this objective for various $f$-divergences and showed that it has variance reduction properties, and leads to models with good generalization performance under \emph{small} perturbations ($\rho = O(1/n)$, where $n$ is the data size). Furthermore, \citet{duchi2018learning} showed that DRO under \emph{large} perturbations ($\rho = O(1)$) leads to models with good fairness, tail risk guarantees. In another line of work, \citet{li2020tilted} (and its extended version \citet{beirami2023tilted}) considered KL divergence DRO, which is obtained by choosing $f(x) = x\log{x}$,  and showed that setting dual variable $\lambda$ to a negative value results in robust models that can withstand corruptions in the training data.

Inspired by these impressive properties, several recent studies have developed algorithms for optimizing DRO and designed data re-weighting techniques for various learning tasks. These algorithms fall into two broad categories:
(a) \textit{Primal-Dual techniques} which rely on alternating mirror ascent, descent to solve the min-max DRO objective~\citep{namkoong2016stochastic, yan2020optimal, fidon2020distributionally}, and (b)
\textit{Compositional optimization techniques} which solve an equivalent compositional/dual form of DRO, which takes the form $g(\mathbb{E}_{z}[h(z, \theta)])$, for some functions  $g, h$~\citep{qi2021online,qi2020attentional, qi2022stochastic, beirami2023tilted}.
While these algorithms come with good convergence guarantees, they have certain drawbacks that limit their use in practice. (a) \emph{Scalability:} primal-dual algorithms require updating and sampling from a probability distribution over the entire dataset at each iteration, making them computationally expensive. Although compositional optimization techniques alleviate this issue, gradient estimation within these algorithms is non-trivial as the objective is no longer an empirical mean of the losses evaluated at the training data points. Overcoming this often necessitates maintaining moving averages of sample weights, which introduces additional hyperparameters, complicating their application to large-scale scenarios~\citep{qi2020attentional, li2020tilted}.  (b) \emph{Robustness to Outliers:} many real-world datasets contain outliers\footnote{{An outlier is a data point that lies significantly outside the typical pattern of a dataset. These outliers could be because of noise in data collection process or could be introduced by a malicious adversary. In this work, we are primarily concerned about the former type of outliers.}}, which pose challenges to algorithms optimizing DRO. In particular, these outliers often result in poor performance and instability during the DRO training process~\citep{zhu2022generalized, zhai2021doro}.  Existing works often fail to account for outliers in real datasets, leading to subpar performance (see Table~\ref{tab:comparison_prior_works} for a detailed comparison).

\begin{table*}[tbh!]
	\centering
	\resizebox{\textwidth}{!}{
		\begin{tabular}{c|| c| c | c|c}
			\hline
			Paper & Algorithm  & \begin{tabular}{c}Per-step compute\\ complexity\\ (compared to  SGD)\end{tabular} & Hyper-parameters  & \begin{tabular}{c} Handles\\ outliers in DRO?\end{tabular} %
			\\ [0.5ex]
			\hline\hline
			\citet{namkoong2016stochastic}  & \begin{tabular}{c}P-D\end{tabular} &  \begin{tabular}{c}  $O(\log{n})\times$ more expensive\\\end{tabular}  & lr of dual variables & No
			\\\hline
			\citet{qi2021online} & \begin{tabular}{c}C-M\end{tabular}&  \begin{tabular}{c}$2\times$ more expensive\end{tabular}  & moving avg. parameter& No
			\\\hline
			\begin{tabular}{c}  \citet{li2020tilted, beirami2023tilted}\\\citet{qi2020attentional}\end{tabular}  & \begin{tabular}{c}C-M\end{tabular} & same as SGD& \begin{tabular}{c} exponential scale,\\ moving avg. parameter\end{tabular} & No\footnotemark
			\\\hline
			\textbf{This Work} & \begin{tabular}{c} stochastic optimization\\ of inner obj.\end{tabular} & same as SGD & \begin{tabular}{c} clipping level\end{tabular} & Yes
			\\\hline
		\end{tabular}
	}
 \caption{Comparison with relevant prior works for optimizing KL-DRO. See Section~\ref{sec:related_work} for a detailed discussion. P-D in the $2^{nd}$ column refers to primal-dual, and C-M refers to compositional minimization. $3^{rd}$ column corresponds to the cost of running each step of the algorithm. $4^{th}$ column corresponds to additional parameters introduced by the algorithm on top of learning rate (lr) of primal variables. }
	\label{tab:comparison_prior_works}
\end{table*}

In this work, we address the aforementioned limitations of  DRO optimization techniques. We focus on KL divergence-based DRO, and develop a lightweight algorithm for efficiently solving the resulting objective. Our algorithm simply optimizes the inner objective in Equation~\ref{eqn:dual_dro} using SGD. This gives rise to our stochastic Re-weighted Gradient Descent (\namel) algorithm, a variant of the classical SGD, that re-weights data points during each optimization step based on their difficulty. A key component of our algorithm is weight clipping that we introduce to protect against (benign) outliers and stabilize the algorithm. 
As demonstrated in our experiments (see Section~\ref{sec:experiments}, Appendix~\ref{appendix:addn_results}), weight clipping significantly improves the performance of our algorithm on numerous learning tasks involving real-world datasets. Another noteworthy aspect of our algorithm is that it has the same runtime as SGD, has only one hyper-parameter, and scales to models with billions of parameters.

\subsection{Evaluation}
In our experiments, we show that using our re-weighting scheme on top of existing learning algorithms improves their generalization performance in a variety of learning tasks including supervised learning, meta learning, out-of-domain generalization. While prior works focused on settings involving fairness, class imbalance to show improvements of DRO type methods, our work is the first to show significant improvements in generalization in large scale learning tasks across various domains. 

\paragraph{Supervised Learning:} We evaluate RGD on several supervised learning tasks in language and vision domains. In the language domain, we apply RGD for BERT fine-tuning on the General Language Understanding Evaluation (GLUE) benchmark and show that RGD outperforms the BERT baseline by $+1.94\%$. In the vision domain, we apply RGD for ImageNet-1K classification using ViT-S model, and show that RGD outperforms the ViT-S baseline by $+1.01\%$.

\paragraph{Tabular Classification:} 
Recently, \citet{majmundar2022met} introduced a tabular representation learning method called MET. Deep learning methods trained with the learned representations from MET achieved SOTA performance on downstream classification tasks, significantly improving upon Gradient Boosting decision trees (GBDT; \citet{friedman2001greedy}). 
Our experiments show that applying RGD to the MET framework improves its performance by $1.51\%$ and $1.27\%$ on binary and multi-class tabular classification, respectively.

\paragraph{Domain Generalization:} In domain generalization, the distributions of train and test datasets could be different (for example, training on pictures of real dogs and evaluating cartoon dogs). This task requires robustness to distribution shifts and DRO is a natural framework in this context. \citet{gulrajani2020search} showed that the ERM framework applied over deep networks, is highly effective for this problem. Perhaps surprisingly, this remained the state-of-the-art (SOTA) algorithm for a long time. Only recently, ERM has been beaten on this challenging task~\citep{cha2022domain, addepalli2022learning}. In this work, we show that using \namel~on top of these recent techniques further boosts their performance by $0.7\%$ and gives SOTA results on this task.

\paragraph{Meta-Learning:} In meta-learning, we aim to learn models that generalize to new tasks with limited data. Predominant approaches in this domain use the classical ERM to optimize deep networks \citep{finn2017model,  snell2017prototypical, kumar2022effect}. However, a common issue in this domain is that the learned models solve most tasks but fail catastrophically in some tasks. Consequently, this has promoted works that focus on worst-case performance \citep{collins2020task}. Recently, \cite{beirami2023tilted} used KL-DRO to tackle this problem. However, the authors applied their algorithm for solving meta-regression on a toy-dataset that is free of outliers, and haven't showcased their algorithm on  practically relevant datasets such as Omniglot, \textit{mini}ImageNet. In this work, we show that using \namel~as an off-the-hat addition to Model-Agnostic Meta-Learning (MAML)~\citep{finn2017model} can significantly improve the worst-case accuracy of these models on meta-classification benchmarks such as Omniglot, \textit{mini}ImageNet by up to $3\%$.

\subsection{Contributions}
This work makes the following key contributions:
\begin{itemize}
\item \textbf{KL-DRO Inspired Re-weighting (\namel).}  We introduce \namel, a novel, lightweight data re-weighting technique that improves the generalization of deep neural networks. Inspired by the principles of KL-DRO, \namel~dynamically re-weights samples during optimization based on their difficulty. We further enhance robustness with weight clipping to mitigate the influence of outliers\footnotetext{\citet{li2020tilted} developed hierarchical TERM framework for handling outliers in DRO. But their framework is only applicable to settings such as group fairness where the learner has a priori knowledge of group memberships.}. \namel~is versatile and easily integrated with widely used optimizers like Adam, SGD.

\item \textbf{State-of-the-Art (SOTA) Performance Enhancement.} Extensive experiments demonstrate that \namel~delivers significant performance gains across diverse learning tasks. In tabular classification, \namel~boosts the accuracy of MET  \citep{majmundar2022met} by +1.44\%. For out-of-domain generalization, \namel~outperforms FRR \citep{addepalli2022learning} on DomainBed by +0.7\%.  Additionally, \namel~improves the performance of BERT on GLUE benchmarks by +1.94\% and ViT on ImageNet-1K by +1.01\%. (see Section~\ref{sec:experiments}, Appendix~\ref{appendix:addn_results})
\end{itemize}
\section{Related Work}
\label{sec:related_work}
This section reviews relevant research on DRO. For a comprehensive overview of other popular data reweighting methods in machine learning, including AdaBoost, curriculum learning, please refer to Appendix~\ref{appendix:related_work}.

DRO dates back to the early works of \citet{ben2009robust, ben2013robust}. Since then several works have studied various statistical and optimization aspects of DRO. The seminal works of \citet{lam2016robust, namkoong2017variance, duchi2021statistics} formally showed that minimizing empirical DRO risk - under \emph{small} perturbations ($\rho = O(1/n)$, where $n$ is the data size) - is equivalent to minimizing sum of empirical risk and its standard deviation. Consequently, optimizing DRO leads to a better bias-variance trade-offs and generalizing models. In this work, we rely on this property of DRO to develop our re-weighting scheme. In another seminal work, \citet{duchi2018learning} showed that DRO  - under \emph{large} perturbations ($\rho = O(1)$) - leads to models with good tail performance.

The aforementioned properties of DRO has led to numerous works applying it in various learning scenarios. For instance,~\citet{duchi2018learning, sagawa2019distributionally,qi2021online,qi2020attentional, li2020tilted, beirami2023tilted} used DRO to tackle problems of class-imbalanaced classification and fairness. In another line of work, \citet{namkoong2017variance, fidon2020distributionally} studied DRO for designing models that generalize better than ERM. Our work falls in this second category of works that focus on generalization.

From an optimization perspective, several works have focused on designing efficient algorithms for optimizing the DRO objective. These algorithms can be classified into two broad categories: \emph{primal-dual}~\citep{namkoong2016stochastic, fidon2020distributionally, yan2020optimal}, \emph{compositional optimization techniques}~\citep{qi2021online, qi2020attentional, qi2022stochastic, li2020tilted}. One of the key drawbacks of primal-dual techniques is that they update and sample from a probability distribution over the entire training data at each step (\emph{aka.} dual variables). A naive implementation of this step takes $O(n)$ time, which is prohibitive for large-scale tasks. \citet{namkoong2016stochastic} reduced the complexity of this step to $O(\log{n})$ using data structures such as balanced binary search trees. However, the resulting algorithms are hard to implement in practice. Another drawback of these algorithms is that they require storing a buffer of weights for the entire dataset, which is infeasible at the scale of LLMs. Even if storing the weights is feasible, the presence of data augmentations complicates the resulting algorithms. Compositional optimization algorithms overcome these drawbacks by working with an equivalent dual formulation of DRO that can be written as composition of two functions: $g(\mathbb{E}_z[h(z,\theta)])$.  One major drawback of these techniques though is that estimating the gradient $\nabla_{\theta}g(\mathbb{E}_z[h(z,\theta)])$ from a mini-batch is non-trivial. This requires certain additional steps in the algorithm which add to its complexity. For instance, the algorithm of \citet{qi2021online} requires making two backward passes at each step of SGD. The convergence analysis of \citet{li2020tilted} required two independent mini-batches at each iteration. The algorithms of \citet{qi2020attentional,li2020tilted} both require maintaining a moving average of the weights of the mini-batches, thus adding an additional hyper-parameter to the algorithm. 

\paragraph{Outlier robust DRO.}  Recent works have shown that DRO is highly sensitive to outliers~\citep{zhai2021doro, zhu2022generalized}. This is because DRO tends to magnify the influence of outliers by upweighting them further. This is one of the  reasons for the suboptimal performance of existing DRO algorithms on real world datasets. To address this, \citet{zhai2021doro} consider an adversarial model for outliers (where $\epsilon$-fraction of training data could be arbitrarily corrupted by a malicious adversary). They develop a heuristic to remove the outliers during each descent step for $\chi^2$-DRO and CVaR. While interesting, this is too strong of an adversary model which leads to data wastage in practice. 

\paragraph{Min-min DRO.} Min-min DRO minimizes the following objective: $\inf_{P': D(P'||\pdata) \leq \rho} \E_{P'}[\ell(z,\theta)]$. Contrast this with DRO which minimizes: $\sup_{P': D(P'||\pdata) \leq \rho} \E_{P'}[\ell(z,\theta)]$. Unlike DRO which is primarily studied for generalization and fairness properties, min-min DRO is studied for training in the presence of outliers~\citep{li2020tilted, kumar2021constrained, majidi2021exponentiated}. Instead of upweighting high loss points, min-min DRO downweights them.

\paragraph{Other applications of DRO.} \citet{sagawa2019distributionally} optimized Group DRO for fair models when the group information is known. \cite{sinha2017certifying} studied DRO with Wasserstein divergence for learning models that are robust to adversarial perturbations. 
DRO also appears in many classical statistical problems. For example, many boosting algorithms (including AdaBoost) can be viewed as performing DRO with KL-divergence-based uncertainty sets~\citep{ arora2012multiplicative, friedman2001greedy}. \citet{faury2020distributionally} relied on KL-DRO for counterfactual risk minimization. \citet{sakhi2020improving} use DRO for improving offline contextual bandits algorithms.

\paragraph{Optimization Techniques for Improved Generalization.} Several optimization techniques, that fall outside the umbrella of DRO, have been proposed for improving the generalization of ML models. Of these, Sharpness-Aware Minimization (SAM)~\citep{foretsharpness} is perhaps the most popular technique. From a theoretical perspective, SAM performs robust optimization in the weight space (that is SAM tries to learn a model that is robust to perturbations of weights). In contrast, RGD performs robust optimization in the distribution space. So, RGD and SAM are orthogonal to each other and can potentially be merged together to boost the performance. See Appendix~\ref{sam_comparison} for empirical comparison between RGD and SAM.

\section{Algorithm and Derivation}
\label{sec:algo}
In this section, we first formally introduce DRO and describe its generalization properties. Next, we derive \namel~as a technique for solving DRO. 
\subsection{Distributionally Robust Optimization}

Consider a general learning problem where we are given $n$ i.i.d samples $\{z_i\}_{i=1}^n$ drawn from some unknown distribution $\pdata$. Let $\pdatan$ be the empirical distribution over these samples. Our ideal goal is to find a model $\theta\in \Theta$ that minimizes the population risk: $R(\theta) \coloneqq \E_{\pdata}[\ell(z; \theta)]$. Here $\ell(z; \theta)$ is the loss of $z$ under model $\theta$. Since $\pdata$ is typically unknown, a standard practice in ML/AI is to minimize the empirical risk, which is defined as $$\Rn(\theta) \coloneqq \E_{\pdatan}[\ell(z;\theta)] = \frac{1}{n}\sum_{i=1}^n \ell(z_i; \theta).$$  

In DRO, we assume that a ``worst-case'' data distribution shift may occur, which can harm a model's performance. So, DRO optimizes the loss for samples in that ``worst-case'' distribution, making the model robust to perturbations (see Figure~\ref{fig:dro} for illustration). Letting $D$ be a divergence that measures the distance between two probability distributions, the population and empirical DRO risks w.r.t $D$ are defined as
\[
R_{D}(\theta) \coloneqq \sup_{P': D(P'||\pdata) \leq \rho} \E_{P'}[\ell(z,\theta)],
\quad \Rdron(\theta) \coloneqq \sup_{P': D(P'||\pdatan) \leq \rho} \E_{P'}[\ell(z,\theta)].
\]
Here, $\rho$ is the perturbation radius.
Popular choices for $D$ include $f$-divergences, which are defined as $D_f(Q||P) = \E_{P}[f(dQ/dP)]$ for some convex function $f: \mathbb{R}_+ \to \mathbb{R}$. We note that many popular divergences, such as Kullback–Leibler (KL) divergence ($f(x) = x\log{x}$), Total Variation distance ($f(x) = \frac{1}{2}|x-1|$), and $\chi^2$-divergence ($f(x) = (x-1)^2$), fall into this category.\vspace{0.05in}\\
\textbf{Generalization.} Models learned using ERM can suffer from poor generalization (\emph{i.e.,} performance on unseen data) in high-variance settings. For instance, consider the following well-known generalization guarantee that  holds with high probability for any $\theta \in \Theta$~\citep{wainwright2019high} 
\begin{align}
\label{eqn:pop_risk_ub}
    R(\theta) \leq \Rn(\theta) + c_1\sqrt{\frac{\text{Var}_{\pdatan}(\ell(z; \theta))}{n}} + \frac{c_2}{n}.
\end{align}
Here, $c_1, c_2>0$ are constants, and $\text{Var}_{P}(\ell(z; \theta))$ is the variance of $\ell(z;\theta)$ w.r.t distribution $P$.
Such bounds hold under certain regularity conditions on $\ell$ and $\Theta$. While ERM minimizes the first term in the RHS above, it totally ignores the second term involving the variance. Consequently, in high-variance and/or small $n$ settings where $R(\theta)$ and $\Rn(\theta)$ are far away from each other, ERM tends to have poor generalization guarantees. A natural technique to address this issue is to learn models that consider the bias-variance trade-off and minimize the following objective.
\[
\min_{\theta \in \Theta}\Rn(\theta) + c_1\sqrt{\frac{\text{Var}_{\pdatan}(\ell(z; \theta))}{n}}.
\]
However, minimizing this objective is computationally intractable even when the loss $\ell$ is convex in $\theta$, as the overall objective is non-convex. Recent works have made an interesting connection between the above objective and DRO to address this issue. 
Specifically, when $D$ is an $f$-divergence, the following result holds with high probability, whenever the perturbation radius $\rho = \frac{c}{n}$, for some appropriately chosen constant $c>0$~\citep{lam2019recovering, namkoong2017variance} 
\[
\Rdron(\theta) = \Rn(\theta) + c_1\sqrt{\frac{\text{Var}_{\pdatan}(\ell(z; \theta))}{n}} + \frac{c_3}{n} \quad \forall \theta \in \Theta.
\]
 Ignoring the lower order terms (i.e., $1/n$ terms), the above equation, together with Equation~\ref{eqn:pop_risk_ub}, shows that the empirical DRO risk $\Rdron(\theta)$ is an upper bound of the population risk $R(\theta)$ at any $\theta \in \Theta$. Furthermore, it can be seen that empirical DRO is equal to the empirical risk plus a variance term (modulo the lower order term). This variance term acts as a regularizer during the optimization of empirical DRO and leads to models with smaller variance, and with good generalization guarantees.

\subsection{Stochastic Re-weighted Gradient Descent (\namel)}
The above discussion motivates the use of DRO risk for learning models, especially in high variance and/or low sample regime. We now derive our $\namel$~algorithm as a technique to minimize the empirical DRO risk $\Rdron$, and to learn models with better generalization guarantees than ERM. Specifically, we consider KL divergence-based DRO, where one adds perturbations to create data distributions that are close to the original data distribution in the KL divergence metric, and learn a model with best performance over all possible perturbations. 
The following proposition derives the \emph{equivalent} dual representation of the KL divergence-DRO objective, the proof of which can be found in Appendix~\ref{sec:proof1}.
\begin{proposition}{\citep{shapiro2017distributionally}}
    \label{prop:dual_dro_kl}
     Consider DRO with KL-divergence-based uncertainty set.  Then $\min_{\theta\in\Theta}\Rdron$ can be rewritten as:
    $$
    \min_{\theta\in\Theta}\frac{1}{\gamma}\log{\E_{\pdatan}[e^{\gamma\ell(z; \theta)}]},
    $$
    for some constant $\gamma>0$ that is independent of $\theta$.
\end{proposition}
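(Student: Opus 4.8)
The plan is to specialize the $f$-divergence dual in Equation~\ref{eqn:dual_dro} to the empirical distribution $\pdatan$ and the KL generator $f(x)=x\log x$, and then eliminate the two dual variables $\eta$ and $\lambda$ one at a time. First I would compute the Fenchel conjugate of $f$: solving $f^*(s)=\sup_{t>0}\{st-t\log t\}$ by setting the derivative $s-\log t-1$ to zero gives $t=e^{s-1}$ and hence $f^*(s)=e^{s-1}$.

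Substituting $f^*$ and $P=\pdatan$ into Equation~\ref{eqn:dual_dro}, the inner objective becomes
\[
\inf_{\lambda\ge0}\inf_{\eta\in\R}\ \lambda e^{-1}e^{-\eta/\lambda}\,\E_{\pdatan}\!\left[e^{\ell(\theta,z)/\lambda}\right]+\lambda\rho+\eta.
\]
For fixed $\lambda>0$ this is smooth and strictly convex in $\eta$, so I would minimize in closed form: writing $M(\theta,\lambda)=\E_{\pdatan}[e^{\ell(\theta,z)/\lambda}]$, the stationarity condition yields $\eta^\star=\lambda(\log M-1)$; back-substitution collapses the first term to exactly $\lambda$, which cancels against the $-\lambda$ inside $\eta^\star$, leaving
\[
\Rdron(\theta)=\inf_{\lambda\ge0}\Big\{\lambda\rho+\lambda\log\E_{\pdatan}\big[e^{\ell(\theta,z)/\lambda}\big]\Big\}.
\]
This is the Gibbs/Donsker--Varadhan variational identity for KL-DRO, expressing the hard-constrained inner supremum as a Lagrangian infimum over the multiplier $\lambda$.

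To reach the stated form I would then take the outer $\min_{\theta}$ and swap it with $\inf_{\lambda}$ (always valid for two infima), obtaining $\min_\theta\Rdron=\inf_{\lambda\ge0}\min_{\theta}\{\lambda\rho+\lambda\log\E_{\pdatan}[e^{\ell/\lambda}]\}$. For each fixed $\lambda$ the term $\lambda\rho$ is independent of $\theta$, so with the substitution $\gamma=1/\lambda$ the inner minimization is precisely $\min_\theta\frac1\gamma\log\E_{\pdatan}[e^{\gamma\ell}]$. Letting $\lambda^\star$ (equivalently $\gamma=1/\lambda^\star$) denote the jointly optimal multiplier, the minimizer over $\theta$ of the DRO objective coincides with that of $\frac1\gamma\log\E_{\pdatan}[e^{\gamma\ell}]$, the two objectives differing only by the $\theta$-independent constant $\lambda^\star\rho$; this is the sense in which $\min_\theta\Rdron$ ``can be rewritten as'' the log-sum-exp objective with a single constant $\gamma>0$.

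The main obstacle is making the ``constant $\gamma$ independent of $\theta$'' precise, since for a \emph{fixed} $\theta$ the optimal multiplier $\lambda^\star(\theta)$ does vary with $\theta$; the constant only emerges after the \emph{joint} minimization, where $\gamma$ is the reciprocal of the single jointly-optimal multiplier and the equivalence is at the level of minimizers. Supporting this requires (i) attainment of $\lambda^\star$, which I would establish by noting that $\lambda\mapsto\lambda\log\E_{\pdatan}[e^{\ell/\lambda}]$ is convex (it is the perspective of the convex cumulant-generating function $u\mapsto\log\E_{\pdatan}[e^{u\ell}]$) and that the added $\lambda\rho$ makes the objective coercive as $\lambda\to\infty$, while the limit $\lambda\to0^+$ gives the finite value $\max_i\ell(z_i,\theta)$ for bounded loss; and (ii) mild regularity on $\Theta$ and $\ell$ (e.g.\ compactness and lower semicontinuity) so that the outer minimum is attained and the order swap is justified. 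That $\gamma=1/\lambda^\star>0$ also matches the intended behavior, since $\gamma>0$ upweights high-loss points, consistent with the DRO regime rather than the min-min regime discussed earlier.
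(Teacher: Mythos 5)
Your proposal is correct, and it reaches the paper's key intermediate identity by a genuinely different route. The paper never touches the two-variable dual of Equation~\ref{eqn:dual_dro}: it dualizes the constrained supremum over $P'$ directly with a single multiplier $\beta$, swaps the sup and inf by concave--convex Lagrangian duality, identifies the inner maximizer as the Gibbs distribution $P'(z)\propto e^{\ell(z,\theta)/\beta}$ from first-order optimality, and substitutes back; this requires writing $D(P'\|\pdatan)=\E_{P'}[\log dP']+\log n$, which is exactly why the appendix version of the proposition adds the assumption that the $n$ data points are distinct. You instead start from the already-dualized Shapiro form, compute $f^*(s)=e^{s-1}$ for $f(x)=x\log x$, and eliminate $\eta$ in closed form --- a correct computation (the conjugate term indeed collapses to exactly $\lambda$ at $\eta^\star$) that lands on the same identity
\[
\Rdron(\theta)=\inf_{\lambda\ge 0}\Big\{\lambda\rho+\lambda\log\E_{\pdatan}\big[e^{\ell(z;\theta)/\lambda}\big]\Big\}
\]
without any uniqueness assumption on the sample, so your derivation is in fact slightly more general. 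From that point on the two arguments coincide: freeze the multiplier at its jointly optimal value and set $\gamma=1/\lambda^\star$. Where your write-up is more careful than the paper is precisely this last step: the paper disposes of it with ``letting $\gamma^{-1}$ be the minimizer of the outer minimization problem, we get the required result,'' whereas you make explicit that the per-$\theta$ optimal multiplier varies with $\theta$, that the equivalence holds only at the level of the joint minimization (identical minimizers, optimal values differing by the $\theta$-independent constant $\lambda^\star\rho$), and that attainment of $\lambda^\star$ needs justification (convexity via the perspective of the cumulant generating function, coercivity from the $\lambda\rho$ term, and the finite limit $\max_i \ell(z_i;\theta)$ as $\lambda\to 0^+$). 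The one residual corner case --- if the infimum is attained at $\lambda^\star=0$ (large $\rho$, when the KL ball contains point masses), the claimed ``constant $\gamma>0$'' degenerates --- is glossed over by the paper as well, so flagging it is a feature of your proof, not a gap.
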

\begin{algorithm}[t]
\caption{Re-weighted Gradient Descent (\namel)}
\label{alg:algorithm_overall}
\begin{algorithmic}[1]
  \small
  \STATE \textbf{Input:}   Data $\{z_i\}_{i=1}^n,$ learning rate sequence $\{\eta_t\}_{t=1}^T$, number of iterations $T$, loss function $\ell$, re-weighting function $g$, mini-batch size $B$
  \FOR{$t = 0 \dots T-1$}
  \STATE Sample minibatch $\{z_i\}_{i=1}^B$
  \STATE Compute losses for points in the minibatch:
       $$\ell_i \leftarrow \ell(z_i;\theta_t), \ \forall i \in {1 \dots B}$$
  \STATE Compute per-sample weights:
    $$w_i \leftarrow g(\ell_i) \ \forall i \in {1 \dots B}$$
  \STATE Compute the weighted pseudo-gradient: $$v_t \leftarrow \frac{1}{B}\sum_{i=1}^B w_i \nabla_{\theta} \ell(z_i; \theta_t)$$
  \STATE Update model parameters:  $$\theta_{t+1} \leftarrow \Pi_{\Theta}(\theta_t - \eta_t v_t)$$
  \ENDFOR
\end{algorithmic}
\end{algorithm}
Equipped with the dual representation, we now derive our \namel~algorithm. Observe that minimizing the compositional objective $\log{\E[\exp{(\gamma\ell(z; \theta))}]}$ is equivalent to minimizing the inner objective $\E[\exp{(\gamma\ell(z;\theta))}].$  In this work, we perform SGD on this inner objective, which leads to the following update:
\[
\theta_{t+1} \leftarrow \Pi_{\Theta}\left( \theta_t - \gamma \eta_t \frac{1}{B}\sum_{i-1}^B e^{\gamma \ell(z_i; \theta_t)}\nabla_{\theta_t}  \ell(z_i;\theta_t)\right).
\]
Here $\Pi_\Theta$ is the projection onto the feasible set $\Theta$. The following proposition shows that this update rule converges to the minimum of $\Rdron$ under certain conditions on $\ell(z;)$. The proof, given in Appendix~\ref{sec:proof2} follows from an application of \citet[Theorem 2]{shamir2013stochastic}.

\begin{proposition}
    \label{prop:convergence}
    Assume that $\Theta$ is a convex and compact set.
    For all data points $z$ suppose that $\ell(z;\cdot)$ is convex, continuously differentiable and bounded in the range $[-M,M]$, and $\nabla\ell(z;\cdot)$ is uniformly bounded over the set $\Theta$. Let the step-size sequence $(\eta_t)_t$ be such that $\eta_t = \frac{C}{\sqrt{t}}$ $\forall$ $1\leq t \leq T$  or $\eta_t = \frac{C}{\sqrt{T}}$ $\forall$ $1\leq t\leq T$. Then the sub-optimality gap satisfies: 
    $$\E_{\theta_T}\frac{1}{\gamma}\log{\E_{\pdatan}[e^{\gamma\ell(z; \theta_T)}]} 
    - \min_{\theta \in \Theta}\frac{1}{\gamma}\log{\E_{\pdatan}[e^{\gamma\ell(z; \theta)}]} = O\left(\frac{\log T}{\sqrt{T}}\right).$$
    
\end{proposition}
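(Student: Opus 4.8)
The plan is to recognize the \namel~update as projected stochastic gradient descent applied to the smooth inner objective
$F(\theta) \coloneqq \E_{\pdatan}[e^{\gamma\ell(z;\theta)}]$, to verify the hypotheses needed to invoke the last-iterate guarantee of \citet{shamir2013stochastic}, and then to transfer the resulting suboptimality bound from the scale of $F$ to the scale of $\tfrac{1}{\gamma}\log F$ via a Lipschitz argument. Since the minibatch is drawn from $\pdatan$, the pseudo-gradient $\tfrac{1}{B}\sum_i \gamma e^{\gamma\ell(z_i;\theta_t)}\nabla\ell(z_i;\theta_t)$ is, conditional on $\theta_t$, an unbiased estimate of $\nabla F(\theta_t) = \E_{\pdatan}[\gamma e^{\gamma\ell(z;\theta)}\nabla\ell(z;\theta)]$, so the algorithm is genuinely SGD on $F$.

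First I would establish that $F$ is convex on $\Theta$. Because $\gamma>0$, the map $x\mapsto e^{\gamma x}$ is convex and nondecreasing, and $\ell(z;\cdot)$ is convex by assumption; hence the composition $\theta\mapsto e^{\gamma\ell(z;\theta)}$ is convex for each fixed $z$, and averaging over the finitely supported $\pdatan$ preserves convexity. Next I would check that the stochastic gradients are uniformly bounded: using $\ell(z;\cdot)\in[-M,M]$ we get $e^{\gamma\ell}\le e^{\gamma M}$, and combined with the uniform bound $\|\nabla\ell(z;\cdot)\|\le G$ over $\Theta$, each summand satisfies $\|\gamma e^{\gamma\ell}\nabla\ell\|\le \gamma G e^{\gamma M}$, so the stochastic gradient is bounded almost surely (and thus in second moment) by this constant. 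Together with compactness of $\Theta$ (which bounds the diameter and makes $\Pi_{\Theta}$ well defined), these are exactly the conditions required by the theorem.

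I would then invoke \citet[Theorem 2]{shamir2013stochastic}: for a convex objective optimized by projected SGD with bounded stochastic gradients and either a decaying step size $\eta_t = C/\sqrt{t}$ or a fixed step size $\eta_t = C/\sqrt{T}$, the last iterate satisfies $\E[F(\theta_T)] - \min_{\theta\in\Theta}F(\theta) = O(\log T/\sqrt{T})$. The $\log T$ factor is precisely the hallmark of the last-iterate (as opposed to averaged-iterate) guarantee, which is why it appears in the final rate.

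The only step requiring genuine care is passing from the $F$-gap to the stated $\tfrac{1}{\gamma}\log F$-gap. Since $\tfrac{1}{\gamma}\log(\cdot)$ is strictly increasing, the minimizer $\theta^\star$ of $F$ also minimizes $\tfrac{1}{\gamma}\log F$, so the two minima correspond. On the range $F(\theta)\in[e^{-\gamma M}, e^{\gamma M}]$ guaranteed by $\ell\in[-M,M]$, the map $x\mapsto\tfrac{1}{\gamma}\log x$ has derivative $\tfrac{1}{\gamma x}$ bounded by $\tfrac{e^{\gamma M}}{\gamma}$, hence is Lipschitz with that constant. By the mean value theorem, $\tfrac{1}{\gamma}\log F(\theta_T) - \tfrac{1}{\gamma}\log F(\theta^\star) \le \tfrac{e^{\gamma M}}{\gamma}\bigl(F(\theta_T) - F(\theta^\star)\bigr)$; taking expectations and substituting the Shamir--Zhang bound yields the claimed $O(\log T/\sqrt{T})$ rate. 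I expect the main obstacle to be purely bookkeeping rather than conceptual: confirming that the constants hidden in the $O(\cdot)$ — which depend on $M$, $G$, $\gamma$, the diameter of $\Theta$, and $C$ — are all finite under the stated assumptions, and making sure the $\gamma$ pulled into the learning rate in the update is tracked consistently between the $F$-scale and $\tfrac{1}{\gamma}\log F$-scale computations.
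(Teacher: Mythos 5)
Your proposal is correct and follows essentially the same route as the paper's own proof: view the update as projected SGD on the convex inner objective $F(\theta)=\E_{\pdatan}[e^{\gamma\ell(z;\theta)}]$, invoke \citet[Theorem 2]{shamir2013stochastic} for the last-iterate $O(\log T/\sqrt{T})$ bound, and transfer to the $\tfrac{1}{\gamma}\log F$ scale via a Lipschitz/mean-value argument using $\ell\in[-M,M]$ (you even supply the explicit constant $e^{\gamma M}/\gamma$ where the paper only asserts existence of a $\bar{C}$). The sole minor discrepancy is that the paper notes the constant step-size case $\eta_t=C/\sqrt{T}$ requires a small modification of the proof of Shamir--Zhang (replacing $1/\sqrt{t}$ by $1/\sqrt{T}$) rather than being a direct instance of the theorem's statement, a point you gloss over but which does not affect correctness.
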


This shows that choosing the re-weighting function $g(u)$ in Algorithm~\ref{alg:algorithm_overall} as $e^{\gamma u}$ (for some appropriate choice of $\gamma$) leads to robust models with better generalization guarantees. This is the choice of $g$ we use in our paper. One thing to note here is that, even though the proposition is specific to SGD, it can be easily extended to other optimization techniques such as Adam. 

\paragraph{Weight Clipping.} In our experiments, when computing per-sample weights, we clip the loss $\ell$ at some constant $\tau>0$; that is, we use $g(u) = e^{\gamma\min(u, \tau)}$. We observed this clipping to help stabilize the training in the presence of outliers (see Section~\ref{appendix:robustness_hparam} for empirical evidence). We note that this is different from other common techniques, such as loss, gradient clipping, which are used to make the learning process robust to outliers in empirical risk minimization~\citep{yang2010relaxed, catoni2018dimension, menon2019can, koloskova2023revisiting}. We plan to investigate the robustness properties of weight clipping in more detail in future work. In our experiments, we choose the scale parameter $\gamma = 1/(\tau+1)$. Even with this fixed choice of $\gamma$, our algorithm provides a significant boost in performance over vanilla optimization techniques (see Section~\ref{appendix:robustness_hparam} for ablations on our design choices).

\paragraph{Choice of divergences.} One could rely on other divergences instead of the KL-divergence we used in the above derivation. From a theoretical perspective, DRO with many $f$-divergences (KL, reverse KL, chi-squared etc.) provides an upper bound on the true population risk \citep{lam2016robust, duchi2021statistics}. For example, using $\chi^2$-divergence ($f(x) = (x-1)^2$) gives us the following re-weighting function for positive loss functions: $g(u) = u + \tau$. Using reverse KL-divergence ($f(x) = -\log{x}$) gives us the following re-weighting function $g(u) = \frac{1}{\tau-u}$ for some appropriate choice of $\tau$ (see Appendix~\ref{sec:other_divergences} for more details). Observe that the reverse KL-divergence based weighting function is much more aggressive in up-weighting high loss points than KL-divergence based weighting function. In the sequel, we denote the approach with $g(u) = u + \tau$ as \namex, $g(u) = \frac{1}{\tau-u}$ as \namet, and $g(u) = e^{\min(u, \tau)/(\tau+1)}$ as \namee.  While all the three techniques provided a performance boost over ERM (see Appendix~\ref{appendix:difference}), KL-divergence based reweighting has better performance than reverse KL, chi-squared divergences. However, the current theoretical understanding of DRO doesn’t explain this nuanced behavior and we believe that this is an interesting direction for future research. From a practitioner perspective, the choice of $f$-divergence could be treated as a hyper-parameter which could be tuned using cross-validation.

\section{Experiments}
\label{sec:experiments}
In this section, we first present ablations on various design choices in our algorithm. Next, we present empirical evidence showing that  \namel~outperforms ABSGD~\citep{qi2020attentional}, TERM~\citep{li2020tilted}, two state-of-the-art algorithms for optimizing KL-DRO. Finally, we present large scale experiments showing that \namel~can be widely applied across tasks such as supervised learning, meta learning to boost the generalization performance of existing learning algorithms. 
Details regarding hyperparameter tuning are relegated to Appendix~\ref{appendix:reproducibility}. Additional experiments on class imbalanced classification, and  large-scale tasks such as miniGPT pre-training, EfficientNet finetuning are presented in Appendix~\ref{appendix:addn_results}. 

To ensure fair comparisons, we integrated RGD into existing baseline codebase for each of the experiments, maintaining the same optimizer (primarily Adam for most experiments, SGD for CIFAR experiments) for both baseline and RGD versions. RGD introduces only one additional hyperparameter, the clipping factor ($\tau$). The optimizer, weight decay, batch sizes, etc. were kept constant with the baseline across all our experiments. 

\begin{figure}[hbt!]
\centering
\includegraphics[width=0.6\linewidth]{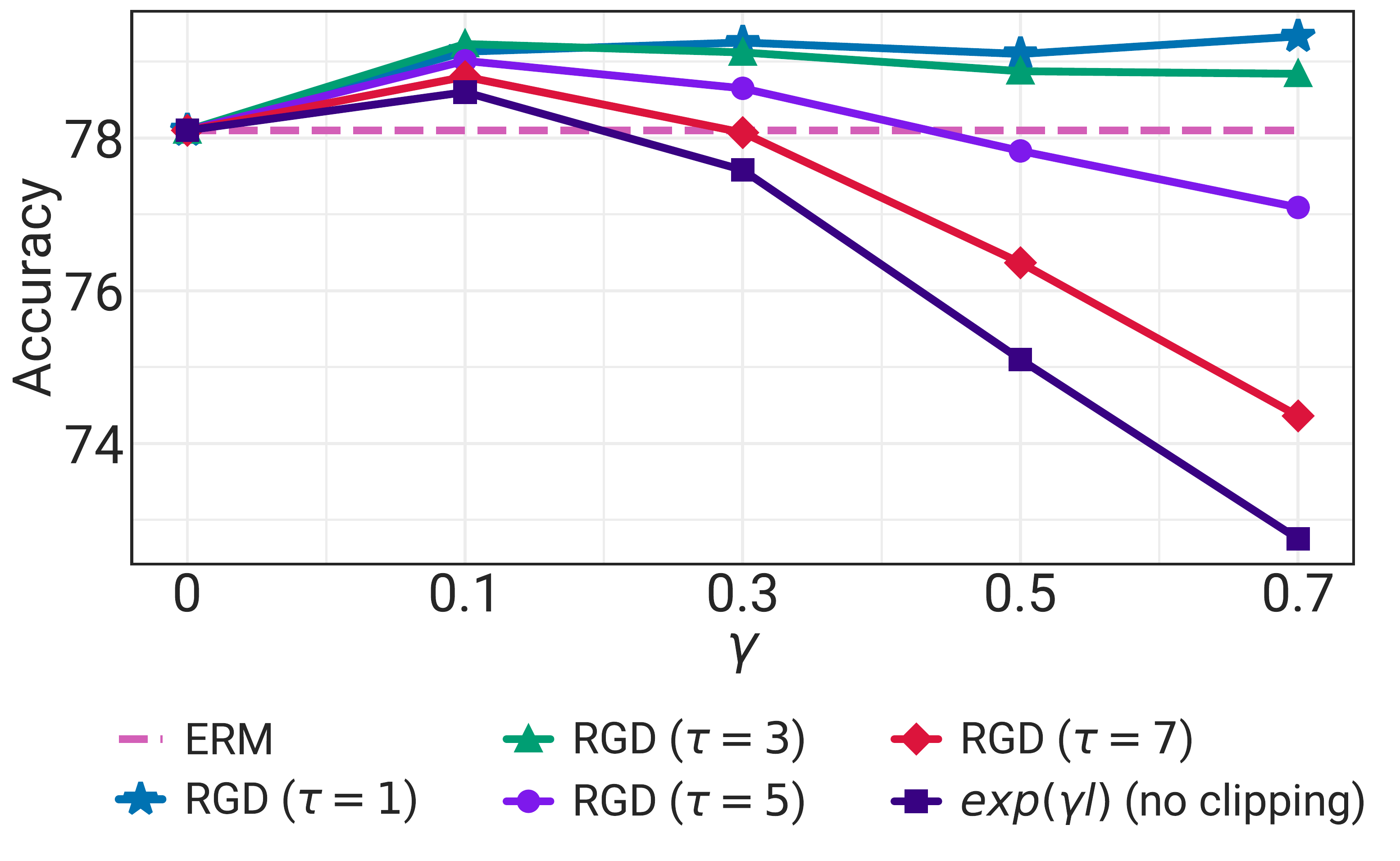}
\vspace{-0.2in}
\caption{Ablation of scaling and clipping factor of RGD training regime on the Imagenet dataset with a ViT-S backbone.}
\label{fig:ablation_hyperparam}
\end{figure}

\subsection{Ablation Studies}
\label{appendix:robustness_hparam}
In this section, we present ablations justifying the key design choices in our algorithm. All the results presented here are for ImageNet-1K classification using ViT-S model.
\paragraph{Choice of scale parameter ($\gamma$).} Recall, \namel~uses $\gamma = \frac{1}{\tau + 1}$, where $\tau$ is the clipping level. To understand the utility of this choice, we vary $\gamma$. To be precise, we set $\tau = 1$ and get accuracy numbers of \namel~with various values of $\gamma$.  Figure~\ref{fig:ablation_hyperparam}  (light blue line with \emph{stars}) presents the results from this experiment. For each value of $\gamma$, we report the accuracy obtained using the best learning rate identified using hold-out set validation. It can be seen that \namel~is fairly robust to the choice of $\gamma$.

\paragraph{Importance of clipping.} We now study the importance of clipping. To this end, we replace the proposed reweighting function in \namel~with $g(u) = e^{\gamma u}$. Figure~\ref{fig:ablation_hyperparam}  (dark blue line with \emph{squares}) presents the results from this experiment, for various values of $\gamma$.  It can be seen that the best accuracy without clipping is $1\%$ off compared to $\namel$~with clipping (with $\tau=1$). We believe this performance drop primarily happens because of the high weights given to the outliers. This demonstrates the importance of clipping. Next, we vary the clipping factor $\tau$. It is evident that as $\tau$ increases, the performance of \namel~drops and approaches the performance of no clipping. This shows the importance of setting an appropriate $\tau$.

To further demonstrate that our weight clipping can effectively handle benign outliers, we perform the following experiment. We randomly flip the labels in CIFAR-10, CIFAR-100 datasets (we vary the proportion of flips from $0\%$ to $40\%$) and compare the performance of RGD with state-of-the-art KL-DRO optimization technique TERM~\citep{li2020tilted}. The results from this experiment are reported in Table~\ref{tab:cifar_label_noise_uniform}. Note that TERM, which does not account for outliers, performs poorly in this experiment, as it upweights the corrupted/mislabeled points. Whereas, RGD which clips the weights of outliers, achieves the best performance.

\begin{table*}[hbt!]
\centering
\caption{Results on CIFAR-10 and CIFAR-100 dataset with corrupted labels.}
\vskip 0.15in
\resizebox{\linewidth}{!}{
\begin{tabular}{lcccc|cccc}
\toprule
Dataset &
      \multicolumn{4}{c|}{\data{CIFAR-10}} &
      \multicolumn{4}{c}{\data{CIFAR-100}} \\
Loss &
 \data{0\%} &
  \data{20\%} &
  \data{40\%} &
  \textbf{Avg.} &
  \data{0\%} &
  \data{20\%} &
  \data{40\%} &
  \textbf{Avg.} \\
\midrule
Default (Cross Entropy) & 92.89 \std{0.32}  & 76.83 \std{2.30}  &  70.77 \std{2.31}  & 80.16 & 70.50 \std{0.12} & 50.86 \std{0.27} & 43.01 \std{1.16} &  54.79\\
TERM~\citep{li2020tilted} & 92.90 \std{0.09} & 58.7 \std{39.76} & 73.17 \std{31.59} & 74.92& 70.42 \std{0.41} & 63.85 \std{1.40} & 46.59 \std{22.80} & 60.29 \\
\namee~\textbf{(Ours)}     & \textbf{93.04} \std{0.24}  & \textbf{90.69} \std{0.24}   &  \textbf{88.90} \std{0.15}  & \textbf{90.88}  &  \textbf{71.06} \std{0.22} & \textbf{64.61} \std{0.39} & \textbf{57.17} \std{0.80} &  \textbf{64.28} \\

\bottomrule
\end{tabular}}
\label{tab:cifar_label_noise_uniform}

\end{table*}

\subsection{Comparison with existing KL-DRO optimization techniques}
\label{sec:ablation}
In this section, we present experimental results on ImageNet classification with ViT-S model to showcase the efficacy of \namel~over state-of-the-art KL-DRO optimization techniques ABSGD, and TERM (see Table~\ref{tab:rgd_vs_kldro}). We were unable to compare with the recent SCDRO algorithm of \citet{qi2022stochastic} due to technical difficulties with implementing it in JAX~\citep{jax2018github} (the main difficulty arises from syncing values of parameters across various devices). Furthermore, the algorithm is significantly more complex with many hyperparameters.  Table~\ref{tab:rgd_vs_kldro} shows that ABSGD, TERM with extensive tuning of three hyperparameters (exponential scale, learning rate , and moving average parameter) achieve similar performance as baseline ERM. In contrast, \namel~outperforms the baseline ERM by $1.1\%$ with minimal tuning of hyperparameters, highlighting its efficacy. One of the reasons for this performance difference between \namel, and ABSGD, TERM is the weight clipping we perform in our algorithm, which guards it from outliers. We note that SCDRO  doesn't perform weight clipping and could potentially suffer from a similar performance drop as ABSGD, TERM. Additional details about the experiment can be found in Appendix~\ref{appendix:ablation}.

Next, we compare \namel~with ABSGD, TERM for the problem of class imbalanced classification. For this experiment, we consider the long-tailed CIFAR-10, CIFAR-100 datasets~\citep{cui2019class}. Table~\ref{tab:long_tailed_cifar_kldro_priors} presents the results from this experiment. It can be seen that RGD outperforms both ABSGD and TERM by $>1\%$ on average. Additional details about this experiment, including comparison with specialized techniques for class imbalanced classification such as \emph{class-balanced} loss \citep{cui2019class} and \emph{focal} loss~\citep {lin2017focal},  can be found in Appendix~\ref{sec:toy_example}.

\begin{table}[hbt!]
\vspace{-0.2in}
\centering
\caption{Comparison of \namel~with existing algorithms for solving KL-DRO on ImageNet-1K classification with ViT-S model.}
\vskip 0.1in
\resizebox{0.6\linewidth}{!}{
\begin{tabular}{lcccc}
\toprule
\textbf{Algorithm} & ERM & TERM & ABSGD & \namel \\
\midrule
Accuracy   & 78.44 \std{0.35}  & 78.03 \std{0.27} & 77.12 \std{0.82} & \textbf{79.11} \std{0.12}   \\
\bottomrule
\end{tabular}}
\label{tab:rgd_vs_kldro}
\vspace{-0.1in}
\end{table}

\begin{table}[hbt!]
\centering
\caption{Comparison of RGD with existing KL-DRO optimization techniques for class imbalanced classification. The numbers represent test accuracy on Long-Tailed CIFAR-10, and CIFAR-100 datasets using ResNet-32. Moreover, we perform hypothesis tests to confirm that these results are statistically significant with a p-value that is less than 0.05.}
\vskip 0.15in
\resizebox{0.8\linewidth}{!}{
\begin{tabular}{lccc|ccc}
\toprule
Dataset&
      \multicolumn{3}{c}{\data{CIFAR-10}}  &
      \multicolumn{3}{c}{\data{CIFAR-100}}   \\ \cline{2-4} \cline{4-7}
      
Loss / Imbalance Factor&
  \data{100} &
  \data{10} &
  \textbf{Avg.} &
  \data{100} &
  \data{10} &
  \textbf{Avg.} \\
\cline{1-7}

\midrule
\multicolumn{7}{l}{Cross Entropy (CE)}\\
\midrule
Default  &   71.75 \std{0.75}  & 87.64 \std{0.45}  & 79.7 & 38.35 \std{38.35} & 56.91  \std{0.41} & 47.63 \\
TERM~\citet{li2020tilted} & 72.20 \std{0.18} & 87.52 \std{0.07} & 79.86 & 39.75 \std{0.04} & 57.91 \std{0.39} & 48.83 \\
ABSGD~\citet{qi2020attentional} & 72.43 \std{0.31} & \textbf{87.93} \std{0.25} & 80.18 & 39.77 \std{0.34} & 57.44\std{0.25} & 48.61 \\
\namee~\textbf{(Ours)}    &   \textbf{73.71} \std{0.04}  & 87.78 \std{0.15} & \textbf{80.75} &  
\textbf{41.47} \std{0.30} &  \textbf{58.49} \std{0.04} & \textbf{49.98} \\

\bottomrule
\end{tabular}}
\label{tab:long_tailed_cifar_kldro_priors}
\end{table}

\subsection{Supervised Learning}
\label{sec:supervised_learning}

This section studies our approach when applied on standard supervised learning tasks such as BERT finetuning on GLUE benchmark, and Imagenet-1K classification. We use a base model of ViT-S for the latter task. Table~\ref{tab:bert_glue} depicts our results from this experiment. On GLUE tasks, our \namel~algorithm outperforms the baseline by \textcolor{blue}{\textbf{+1.94\%}} with a standard deviation of 0.42\%. Furthermore, we perform hypothesis testing to condit that these results are statistically significant with a p-value that is less than 0.05. On Imagenet-1K, we show a \textcolor{blue}{\textbf{+1.01\%}} improvement over baseline with the off-the-hat addition of the RGD reweighing and no additional complexity in terms of compute: memory and time. 

Furthermore, we also experiment with pre-training of the BERT-base model. We use the BooksCorpus (800M words) \citep{zhu2015aligning} and English Wikipedia (2,500M words) as our pre-training corpus. We trained the Bert-base model for 450K steps, and tuned the learning rate (lr) for baseline, and lr, clipping factor for \namel. We report both the MLM (Masked Language Model) accuracy and NSP (Next Sequence Prediction) accuracy comparisons of RGD vs Default (ERM). It can be seen that our approach boosts the MLM accuracy and NSP accuracy by \textcolor{blue}{\textbf{+0.2\%}} and \textcolor{blue}{\textbf{+0.9\%}} respectively (see Table~\ref{tab:bert_glue}). Furthermore, through hypothesis testing, we show that the results are statistically significant with a p-value of 0.05. Additional experiments on EfficientNet fine-tuning, DeiT model~\citep{touvron2021training} for ImageNet-1K classification, and miniGPT~\citep{zhu2023minigpt} pre-training are discussed in Appendix~\ref{appendix:supervised_learning}. 

\begin{table*}[hbt!]
\centering
\caption{Performance of RGD for various Supervised Learning tasks. }
\vskip 0.15in

\resizebox{1.0\linewidth}{!}{
\begin{tabular}{ccccccccc|c|cc}
\toprule
 & \data{\textbf{MNLI}}           & \data{\textbf{QQP}}            & \data{\textbf{QNLI}}           & \data{\textbf{SST-2}}          & \data{\textbf{MRPC}}           & \data{\textbf{RTE}}            & \data{\textbf{COLA}}           & \begin{tabular}{c}\data{\textbf{Avg on}}\\\data{\textbf{GLUE}}\end{tabular} & \data{\textbf{ImageNet-1K}} & \begin{tabular}{c}\data{\textbf{BERT Pretraining}}\\ \data{(MLM)}\end{tabular} & \begin{tabular}{c}\data{\textbf{BERT Pretraining} }\\\data{(NSP)}\end{tabular}  \\
\midrule
Default           & 81.33          & 89.62          & 87.93          & 90.63          & \textbf{89.55} & 67.19          & 54.53          & 80.11 &  78.44 \std{0.35} & 71.31 & 98.01        \\
\namee~\textbf{(Ours)}           & \textbf{83.06} & \textbf{91.06} & \textbf{90.35} & \textbf{91.78} & 88.28          & \textbf{71.48} & \textbf{58.56} & \textbf{82.05}  & \textbf{79.11}	\std{0.12} & \textbf{71.35} \std{0.11} & \textbf{98.92} \std{0.09} \\

\bottomrule
\end{tabular}}
\label{tab:bert_glue}
\vspace{-0.1in}
\end{table*}

\subsection{Tabular Classification}

\begin{table}[hbt!]
\centering
\vspace{-0.15in}
\caption{Results on standard binary-class tabular datasets (AUROC): The bottom partition shows results of our method with \namel\ loss. We show that the addition of our proposed approach significantly outperforms existing methods, as well as SOTA.}
\vskip 0.15in
\Huge
\resizebox{0.7\linewidth}{!}{
\begin{tabular}{lccccc}
\toprule
\textbf{Algorithm} &
 \data{Obesity} &
  \data{Income} &
  \data{Criteo} &
  \data{Thyroid} &
  \textbf{Avg.} \\
\midrule
MLP      & 52.3      & 89.39       & 79.82              & 62.3 & 70.95 \\
RF \cite{breiman2001random}     & 64.36    & 91.53        & 77.57             & 99.62 & 83.27 \\
 \midrule
\multicolumn{6}{l}{MET-S}\\
\midrule
 Default \cite{majmundar2022met} & 71.84    & 93.85 & 86.17    & 99.81 & 87.92 \\
\namee~\textbf{(Ours)} &  \textbf{76.87} &  \textbf{93.96}       &  \textbf{86.98}  &  \textbf{99.92} & \textbf{89.43} \\
\bottomrule
\end{tabular}}
\label{tab:met_binary_auroc}
\end{table}

\begin{table}[hbt!]
\centering
\caption{Results on standard multi-class tabular datasets (Accuracy): The bottom partition shows results of our method with \namee\ loss. We show that the addition of our proposed approach significantly outperforms existing methods, as well as SOTA.}
\vskip 0.15in
\Huge
\resizebox{0.7\linewidth}{!}{
\begin{tabular}{lccccc}
\toprule
\textbf{Algorithm} &
 \data{FMNIST} &
  \data{CIFAR10} &
  \data{MNIST} &
  \data{CovType} &
  \textbf{Avg.} \\
\midrule
MLP      & 87.62       & 16.50          & 96.95         & 65.47       & 66.64  \\
RF \cite{breiman2001random}     & 88.43       & 42.73          & 97.62         & 71.37       & 75.04  \\
MET \cite{majmundar2022met} & \textbf{91.68} & 47.82 & 99.19 & 76.71 & 78.85\\
 \midrule
\multicolumn{6}{l}{MET-S}\\
\midrule
Default \cite{majmundar2022met}   & 90.94       & 48.00          & 99.01         & 74.11       & 78.02 \\
\namee~\textbf{(Ours)} &  91.54 &      \textbf{49.54}   &   \textbf{99.69} &  \textbf{79.72}  &  \textbf{80.12} \\
\bottomrule
\end{tabular}}
\label{tab:met_multi_label}

\end{table}

Learning with tabular data is a task where traditional machine learning methods, like random forest \cite{breiman2001random, friedman2001greedy} are incredibly competitive against deep learning-based methods \citep{yoon2020vime}. 
Recently, \citet{majmundar2022met} obtained SOTA results for tabular classification using self-supervised representation learning and relying on the learned representations in the downstream classification tasks (see Section~\ref{sec:intro}).
Their work developed two algorithms namely, MET (representation learning with adversarial training) and MET-S (representation learning without adversarial training). The adversarial training adds robustness to the learned representations, thus improving performance. In this experiment, we integrate \namel~with MET-S instead of doing adversarial training. This allows us to test the robustness properties of the models trained with \namel. Table \ref{tab:met_multi_label} and Table~\ref{tab:met_binary_auroc} shows gains on multiple tabular datasets for the multi-class classification and binary classification tasks. Notably, our approach outperforms previous SOTA in this problem by \textcolor{blue}{\textbf{+1.27\%}}, and \textcolor{blue}{\textbf{+1.5\%}} on the multi-class and binary classification tasks respectively. We refer to Appendix~\ref{tabular_additional_results} for a comprehensive comparison with baselines such as Gradient Boosting Decision Trees \citep{friedman2001greedy}, VIME \citep{yoon2020vime}, SubTab \citep{ucar2021subtab}, TabNet \citep{arik2019tabnet}, DACL+ \citep{verma2021towards} and many more.
Our motivation to experiment on these “permuted” MNIST, “permuted” CIFAR, and “permuted” FMNIST can be traced back to the introduction of these datasets in the works of \cite{yoon2020vime, ucar2021subtab}. Subsequently, other recent works such as \cite{majmundar2022met} also experimented on these datasets and have become a standard benchmark for tabular classification.

\subsection{Out Of Domain Generalization}

\begin{table*}[hbt!]
\caption{Results on DomainBed (Model selection: training-domain validation set): The bottom partition shows results of our method with \namel\ loss. In both cases, with (top) and without (bottom) fixed linear layer, the proposed approach outperforms existing methods, as well as SOTA.}
\vskip 0.15in
\centering
\Huge
\resizebox{0.7\linewidth}{!}{
\begin{tabular}{lccccl}
\toprule
\textbf{Algorithm} &
 \data{PACS} &
  \data{VLCS} &
  \data{OfficeHome} &
  \data{DomainNet} &
  \textbf{Avg.} \\
\midrule
ERM   \cite{gulrajani2020search}   & 85.5   \stdlarge{0.1}       & 77.5 \stdlarge{0.4}          & 66.5 \stdlarge{0.2}         & 40.9 \stdlarge{0.1}               & 67.6   \\
MIRO \cite{cha2022domain} & 85.4 \stdlarge{0.4}          & {\textbf{79.0} \stdlarge{0.0}} & {70.5 \stdlarge{0.4}}  & {44.3 \stdlarge{0.2}} & {69.8}  \\
\midrule
\multicolumn{6}{l}{ERM + FRR-L }\\
\midrule
Default \cite{addepalli2022learning} & 85.7 \stdlarge{0.1}  & 76.6 \stdlarge{0.2} & 68.4 \stdlarge{0.2} & 44.2 \stdlarge{0.1} &  68.73  \\
\namee~\textbf{(Ours)} &  87.6 \stdlarge{0.3} & 78.6 \stdlarge{0.3} & 69.8 \stdlarge{0.2} & \textbf{46.00} \stdlarge{0.0} & 70.48 \\
 \midrule
\multicolumn{6}{l}{ERM + FRR}\\
\midrule
 Default  \cite{addepalli2022learning} & 87.5 \stdlarge{0.1} & 77.6 \stdlarge{0.3} & 69.4 \stdlarge{0.1}  & 45.1 \stdlarge{0.1}  & 69.90
 \\
\namee~\textbf{(Ours)} & \textbf{88.2} \stdlarge{0.2} & 78.6 \stdlarge{0.3} & 69.8 \stdlarge{0.2}& 45.8 \stdlarge{0.0} & \textbf{70.60}\\
\bottomrule
\end{tabular}}
\label{tab:domain_bed}
\vspace{-0.2in}
\end{table*}

In this section, we show that our technique can be used to boost the performance of OOD generalization techniques. We experiment on DomainBed, a standard benchmark used to study the out-of-domain performance of models. More information about the benchmark, the task to solve, and the metric is discussed in Appendix~\ref{domainbed_dataset}.
The benchmark is notorious since the most basic approach, such as straightforward Empirical Risk Minimization (ERM) as evaluated by \cite{gulrajani2020search}, was the SOTA method for a long time. Most new approaches either performed worse than ERM or marginally better. In recent years, breakthroughs such as MIRO \citep{cha2022domain} and FRR \citep{addepalli2022learning} have pushed the problem further by significantly improving the benchmarks. We show that integrating our proposed approach \namel~with these approaches (specifically FRR) significantly improves performance (an average of \textcolor{blue}{\textbf{+0.7\%}}). Table~\ref{tab:domain_bed} illustrates the accuracy performance numbers of a few baseline methods and our proposed approach. A more comprehensive comparison with additional baselines such as IRM \citep{arjovsky2019invariant}, CORAL \citep{sun2016deep}, MTL \citep{blanchard2021domain}, SagNet \citep{nam2021reducing}, and many more in the Table~\ref{tab:domain_bed_additional}. We further depict the environment-wise breakdown of the accuracy of each of the baseline algorithms in Appendix~\ref{domain_bed_additional_results}.

\subsection{Meta-Learning}
\begin{table*}[hbt!]
\caption{Results on meta-learning datasets. We report the Worst-K\% performance as well to help study the performance distribution over all tasks.}
\vskip 0.15in
\centering
\Huge
\resizebox{0.7\linewidth}{!}{
\begin{tabular}{lcccccc}
\toprule
\textbf{Algorithm} &
 \data{Worst 10\%} &
 \data{Worst 20\%} &
  \data{Worst 30\%} &
  \data{Worst 40\%} &
  \data{Worst 50\%} &
  \data{Overall} \\
\midrule
\multicolumn{7}{l}{Omniglot 5-way 1-shot}\\
\midrule
MAML  & 91.71 \stdlarge{0.73}	& 94.16 \stdlarge{0.50}	& 95.41 \stdlarge{0.39} &	96.22 \stdlarge{0.32} &	96.76 \stdlarge{0.27} & 98.38 \stdlarge{0.17} \\
MAML + \namee  & \textbf{92.14} \stdlarge{0.84} &	\textbf{94.54} \stdlarge{0.53} &	\textbf{95.72} \stdlarge{0.40} &	\textbf{96.46} \stdlarge{0.33} &	\textbf{96.90} \stdlarge{0.27} &  \textbf{98.45} \stdlarge{0.17} \\
\midrule
\multicolumn{7}{l}{Omniglot 20-way 1-shot}\\
\midrule
MAML  & 84.33 \stdlarge{0.40} &	85.86 \stdlarge{0.29} &	86.92 \stdlarge{0.26} &	87.73 \stdlarge{0.24} &	88.42 \stdlarge{0.22} & 91.28 \stdlarge{0.22} \\
MAML + \namee  & \textbf{86.61} \stdlarge{0.36} &	\textbf{88.09} \stdlarge{0.28} &	\textbf{89.09} \stdlarge{0.24} &	\textbf{89.87} \stdlarge{0.23} &	\textbf{90.50} \stdlarge{0.21} & \textbf{93.01} \stdlarge{0.20} \\

\midrule
\multicolumn{7}{l}{\textit{mini}ImageNet 5-way 1-shot}\\
\midrule
MAML  & 30.94 \stdlarge{0.70} &	34.52 \stdlarge{0.62} &	36.93 \stdlarge{0.57} &	38.94 \stdlarge{0.55} &	40.68 \stdlarge{0.53} & 48.86 \stdlarge{0.62} \\
MAML + \namee  & \textbf{33.33} \stdlarge{0.90} &	\textbf{36.67} \stdlarge{0.65} &	\textbf{39.12} \stdlarge{0.59} &	\textbf{41.20} \stdlarge{0.56} &	\textbf{42.96} \stdlarge{0.55} & \textbf{51.21} \stdlarge{0.63}\\

\bottomrule
\end{tabular}}
\label{tab:meta_learning_subset}
\end{table*}

In meta-learning, the goal is to learn representations that generalize effectively to new tasks, even when provided with limited examples. However, task heterogeneity poses a significant challenge.  Some tasks may be inherently simpler to learn, leading models to prioritize these and neglect the more difficult, less frequent tasks. While Empirical Risk Minimization (ERM) may perform well on common tasks, its performance can deteriorate drastically on rare and challenging ones. This necessitates a mechanism for re-weighting tasks to ensure balanced learning.
Building upon the experimental results of \citet{kumar2022effect}, we make comparisons with our MAML + \namee~approach as the proposed variant. We evaluate \namel~not only based on the average performance across tasks but also on the Worst-K\% of tasks in a fixed task pool. Our experiments on various benchmarks, including Omniglot 5-way 1-shot, Omniglot 20-way 1-shot, and \textit{mini}ImageNet 5-way 1-shot, demonstrate significant improvements in the Worst-K\% metric (Table~\ref{tab:meta_learning_subset}). For example, on Omniglot 20-way 1-shot, our proposed reweighting scheme improves overall performance by \textcolor{blue}{\textbf{1.83\%}} and the Worst-10\% performance by \textcolor{blue}{\textbf{2.28\%}}. Similarly, on the challenging \textit{mini}ImageNet 5-way 1-shot benchmark, we achieve a substantial improvement of approximately \textcolor{blue}{\textbf{3\%}} across the board. Further results in this domain are discussed in Appendix~\ref{appendix:meta_learning}.

\section{Conclusion, Limitations and Future Work}

We introduced a re-weighted gradient descent (\namel) technique that effectively boosts the performance of deep learning across a wide range of tasks and domains. It is simple to implement and can be seamlessly integrated into existing algorithms with just two lines of code change. Our algorithm is derived from Kullback-Leibler distributionally robust optimization, a known method for improving model generalization. 

While RGD shows promising results, it has the following limitations that warrant further investigation:  (a) \textit{outlier handling:} our current approach uses weight clipping to mitigate the impact of outliers. While empirically effective, a more principled approach to outlier robustness in DRO is worth exploring, and (b)
\textit{robustness to noise:} while RGD can handle benign noise, it can fail in the presence of adversarial/systematic noise.  In the future, we plan to develop variants of \namel~that can tolerate adversarial corruptions in the training data, while simultaneously improving the model generalization. Additionally, we plan to evaluate our technique on large-scale tasks, such as fine-tuning Large Language Models (LLMs) and other foundation models. This will help us  better understand the usefulness and limitations of our approach.

\subsubsection*{Ethical Statement and Broader Impact}

Our proposed approach is compatible with any learning objective expressed as an expectation over samples. We showcased its effectiveness with various loss functions, including Mean Square Error, Cross Entropy, and others, outperforming previous state-of-the-art methods considerably. Implementing our approach is straightforward, and it has broad applicability across domains such as Natural Language Processing (NLP), Vision, and Time Series data. This paper presents work whose goal is to advance the field of Machine Learning. There are many potential societal consequences of our work, none which we feel must be specifically highlighted here.

\subsubsection*{Acknowledgments}
We would like to thank Ahmad Beirami, Virginia Smith,
Manzil Zaheer, Tian Li, and Maziar Sanjabi for providing
detailed feedback on our paper. We would like to thank Elan
Rosenfeld, and Tianbao Yang for pointing us to important
prior works. We extend our sincere gratitude to Prateek
Jain, Pradeep Shenoy, Anshul Nasery, Lovish Madaan, and
the numerous dedicated members of the machine learning
and optimization team at Google DeepMind India for their
invaluable feedback and contributions to this work.

\bibliography{main.bib}
\bibliographystyle{tmlr}

\appendix
\section*{Appendix}

\section*{Reproducibility Statement}
\label{appendix:reproducibility}
Our proposed loss function is a single line of change. However, one would have to play around with the learning rate (generally lower than the baseline setting). Our experiments are based on public datasets and open-source code repositories. The proposed final formulation  \namee~requires \textcolor{blue}{\textbf{one line of code change}}. \\

\lstset{language=Python}
\lstset{frame=lines}
\lstset{basicstyle=\ttfamily\footnotesize}
\lstset{escapeinside={<@}{@>}}
Suppose the per-sample loss is given. Example code for applying \namee~in Jax is shown below.

\begin{lstlisting}
import jax.numpy as jnp
import jax

def rgd_e(loss, temp=alpha, reduce=True):
    <@\texttt{\textcolor{gray}{\# alpha >0.}}@>
  out = loss * jnp.exp(
      jnp.clip(jax.lax.stop_gradient(loss), a_min=0, a_max=temp) / (temp + 1)
  )
  return out.sum() / len(out) if reduce else out
\end{lstlisting}

\section{Extended Related Work}
\label{appendix:related_work}
\subsection{Per-Sample Reweighting}
In this section, we review data reweighting techniques developed outside of the DRO community. The idea of re-weighting samples can be dated back to the works of \cite{chawla2002smote, zadrozny2004learning}, which pre-computed per-sample weights using certain prior knowledge. Recent approaches alleviate the need for human supervision by dynamically computing the per-sample weights. One of the early works in this category is AdaBoost, which is a popular boosting algorithm~\citep{freund1997decision}. Similar to \namel, AdaBoost uses exponential weighting mechanism to reweight data points. However, AdaBoost is used for learning an ensemble of weak learners. Whereas, in this work, we are interested in learning a single model that can achieve better generalization guarantees. Furthermore, AdaBoost is only studied for supervised learning (in particular, classification and regression). In contrast, \namel~can be applied on any learning task. Recent works of~\cite{leng2022polyloss, lin2017focal} showed that certain modifications to standard cross entropy loss - that involve truncating its Taylor-series expansion - can improve the performance of DNNs. These techniques can be viewed as performing sample re-weighting. However, these techniques only apply to cross-entropy loss and are not easily extendable to general learning tasks.

Other approaches based on meta-learning have been proposed for class imbalance and label noise~\citep{ shu2019meta, ren2018learning, gonzalez2021optimizing}. Many popular approaches in this line of work require training a separate neural network for re-weighting the data points~\citep{ren2018learning, shu2019meta}. However, these approaches are seldom used in practice as the underlying  bi-level optimization problem is hard to implement~\citep{ivanova2023challenge}. Unlike these approaches, our \namel~algorithm does not require a separate neural network for re-weighting and thus doesn't add any computational overhead over vanilla training. Moreover, compared to existing sample re-weighting techniques, our approach applies to various learning tasks (see Section~\ref{sec:experiments}).
 Another line of work uses a history buffer which stores a snapshot of the trajectory of each point and facilitates giving more importance to points which leads to more learning in the model \citep{zhang2021learning}. Other approaches, such as \cite{zhureinforced}, use reinforcement learning to learn the per-sample weights using a ``pretraining-boosting'' two-stage MDP curriculum where the agent network is firstly pre-trained and optimized for deployment in the classification problem. Another line of work has considered sample re-weighting in the presence of outliers~\citep{kumar2010self, de2003framework, jiang2014easy, jiang2014self, wang2017robust, li2020tilted, beirami2023tilted}. These works down-weight points with high-loss value. The rationality behind this lies in the idea that these high-loss samples are more likely to be outliers and, thus, should be ignored during the training process.
Finally, works such \cite{castells2020superloss} propose a confidence-aware loss proportional to the lowest loss of that sample. They use a threshold ($\gamma$) to decide how practical or important each point is. 

An emerging line of work on optimization focuses on designing sample re-weighting for improving the convergence speed of SGD~\citep{katharopoulos2018not, el2022stochastic} by decreasing the variance in SGD iterates. Note that in contrast to these works which aim to minimize the ERM objective, we aim to solve the DRO objective which has better generalization guarantees than ERM in high variance, low sample complexity regime.

\subsection{Pre-conditioning}
Pre-conditioning can usually mean normalization of inputs, batch normalization, or scaling gradients in a few directions. This section predominantly discusses techniques that focus on scaling gradients in a few directions.  A common technique to improve the training speed in deep learning is using adaptive step-size optimization methods, such as the Newton method, which takes advantage of the second-order gradients. However, computing the Hessian matrix is computationally intensive, leading to Quasi-Newton methods: methods that approximate the value of the Hessian instead of computing them every time \cite{le2011optimization}. Another popular alternative is to use an element-wise adaptive learning rate, which has shown great promise in deep learning. Some of the popular techniques here include ADAgrad \citep{duchi2011adaptive}, RMSProp \citep{ruder2016overview}, ADAdelta \citep{zeiler2012adadelta}. For instance, ADAgrad is a diagonal pre-conditioning technique where the pre-conditioning across each dimension is computed as the inverse square root of the norms of gradients along that dimension accumulated over training. Unfortunately, this accumulation of gradients makes it susceptible to falling in a saddle point as the scaling factor decreases monotonically. %

\subsection{Curriculum Learning}
Another important research area that has explored data reweighting is Curriculum Learning (CL). CL, originally introduced by \cite{bengio2009curriculum}, is a vast domain focussing on how the model should be taught, and draws inspiration from how humans learn concepts. For instance, humans generally grasp on to easier concepts such as basic shapes (triangle, rectangle, etc.) before moving on to learning significantly more complex structures (heptagram, triquetra, etc.). Curriculum learning strategies have been widely used in various areas of machine learning and involves finding a way to rank samples, as well as the right pacing functions for introducing more difficult data in our training. The techniques developed for CL have typically focused on giving importance to easier samples at the beginning of training, and slowly progressing towards harder samples~\citep{bengio2009curriculum, chen2015webly, tudor2016hard, pentina2015curriculum, shi2015recurrent, spitkovsky2009baby, zaremba2014learning}. In contrast, DRO focuses the learning on harder samples throughout the training process. That being said, there have also been a class of works in CL which showed  the learning harder examples first, and then moving to easier ones could lead to improved performance in certain conditions, through Hard Example mining (HEM) or anti-curriculum~\citep{jesson2017cased, shrivastava2016training, wang2018towards, zhou2020curriculum, braun2017curriculum, pi2016self}. There have been predominantly three classes of CL, and various amalgamations of these in literature~\citep{soviany2022curriculum}. 

\textit{Vanilla CL:} The vanilla CL usually involves a pre-defined notion of hardness. For example, \cite{bengio2009curriculum} used geometric shapes to clearly differentiate easy and hard samples. Others such as \cite{spitkovsky2009baby} exploited the length of sequences as a signal for difficulty. 

\textit{Self-Paced Learning (SPCL):} This differs from the vanilla CL with respect to the evaluation of difficulty. This concept of ``difficulty'' is not known beforehand and is measured repeatedly during training. Works such as \cite{kumar2010self} used the likelihood of the prediction to rank the samples. Other works such as \cite{lee2011learning} used objectness as a measure to define the training schedule.

\textit{Balanced Curriculum (BCL):} In addition to prior works such as vanilla CL and SPCL, balanced curriculum approaches come with an added condition of diversity within a batch. These constraints (on classes, image regions, etc.) help the model learn robust features and not overfit to the spurious correlations of the easy samples \citep{zhang2015self, soviany2020curriculum}. 

In this work, we will resort to only describing few works which focus on instance level reweighting of data points \citep{kumar2010self, li2017self, kumar2011learning, pi2016self, liang2016learning, fan2017self, li2017self}. Most of these works~\citep{kumar2011learning, fan2017self} follow a binary weighting mechanism of $\{0,1\}$ to decide whether the model should learn using the current sample or not. Others such as \cite{li2017self, pi2016self, liang2016learning} are more continuous in the weighing mechanism and generally give higher weights to samples with lower losses. This makes them fundamentally different from  \namel~attempts to achieve in this work. \namel~attempts to focus more heavily on the harder samples throughout training and does so using a simple closed form expression of the loss. 

\subsection{Comparison with existing KL-DRO optimization approaches}
\label{appendix:kldro}

In this section, we discuss a few more aspects of the related work that weren't discussed in the main paper. We specifically focus on prior works that developed algorithms for KL divergence based DRO. The earliest work on KL-DRO dates back to 2013 by \citet{hu2013kullback}. However, it was only recently that these works have become widespread in deep learning. The RECOVER algorithm by \citet{qi2021online} was one of the early works to scale KL-DRO to deep neural networks. It attempted to solve the non-convex DRO problem with a duality-free stochastic method by formulating the min-max formulation into an equivalent stochastic compositional problem. ABSGD~\citep{qi2020attentional} and SCDRO~\citep{qi2022stochastic} improved upon RECOVER by designing more efficient algorithms. 
However, the performance of these algorithms on large-scale models and datasets is not rigorously studied, as the Imagenet-LT, iNaturalist experiments conducted in these works started from a pre-trained network and only finetuned the last layer. In contrast, in this work, we learn the entire ViT-S model from scratch and show improved generalization.

\section{Proofs of Section~\ref{sec:algo}}
\label{sec:proofs}
\subsection{Proof of Proposition~\ref{prop:dual_dro_kl}}
\label{sec:proof1}
\begin{proposition}
    Consider DRO with KL-divergence-based uncertainty set. Assume that the data set $(z_i)_{i=1}^{n}$ is comprised of unique points (i.e, no repeated data points). Then $\min_{\theta\in\Theta}\Rdron$ can be rewritten as
    \[
    \min_{\theta\in\Theta}\frac{1}{\gamma}\log{\E_{\pdatan}[e^{\gamma\ell(z; \theta)}]},
    \]
    for some constant $\gamma>0$ that is independent of $\theta$.
\end{proposition}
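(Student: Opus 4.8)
The cleanest route is to specialize the general $f$-divergence dual already recorded in Equation~\ref{eqn:dual_dro} to the KL case $f(x)=x\log x$, and then to eliminate the auxiliary dual variables one at a time until only the log-sum-exp objective and the multiplier $\lambda$ survive. First I would compute the Fenchel conjugate: for $x\ge 0$ the map $x\mapsto sx-x\log x$ is maximized at $x=e^{s-1}$, giving $f^*(s)=e^{s-1}$, which is finite and smooth for all $s\in\R$. Substituting into Equation~\ref{eqn:dual_dro} turns $\Rdron(\theta)$ into
\[
\inf_{\lambda\ge 0}\ \inf_{\eta\in\R}\ \Big\{ \E_{\pdatan}\big[\lambda\, e^{(\ell(z;\theta)-\eta)/\lambda-1}\big] + \lambda\rho + \eta\Big\}.
\]

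\textbf{Eliminating $\eta$.} For fixed $\lambda>0$ the inner objective is $\lambda e^{-1}e^{-\eta/\lambda}\,\E_{\pdatan}[e^{\ell/\lambda}]+\eta$ up to the $\theta$- and $\eta$-independent term $\lambda\rho$; this is strictly convex in $\eta$, so its minimizer is pinned down by the first-order condition $e^{-1}e^{-\eta/\lambda}\,\E_{\pdatan}[e^{\ell/\lambda}]=1$, i.e. $\eta^*=\lambda\big(\log\E_{\pdatan}[e^{\ell/\lambda}]-1\big)$. Plugging $\eta^*$ back collapses the exponential term to exactly $\lambda$, and the sum $\lambda+\eta^*$ telescopes to $\lambda\log\E_{\pdatan}[e^{\ell/\lambda}]$. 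This leaves the reduced representation $\Rdron(\theta)=\inf_{\lambda\ge 0}\big\{\lambda\log\E_{\pdatan}[e^{\ell(z;\theta)/\lambda}]+\lambda\rho\big\}$. (Conceptually, the same reduced form follows from the Donsker--Varadhan/Gibbs variational representation of KL divergence, under which the KL-constrained supremum over $P'$ has Lagrangian relaxation $\lambda\log\E_{\pdatan}[e^{\ell/\lambda}]+\lambda\rho$; I would use this as a sanity check and to justify strong duality.)

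\textbf{Fixing $\gamma$.} Now I exploit that the outer $\min_{\theta\in\Theta}$ and the inner $\inf_{\lambda\ge 0}$ are both infima, so they may be interchanged freely (a joint infimum), with no convexity of $\ell$ in $\theta$ required. Writing the problem as $\inf_{\lambda\ge 0}\min_{\theta}\{\lambda\log\E_{\pdatan}[e^{\ell/\lambda}]+\lambda\rho\}$, I would let $\lambda^*\in(0,\infty)$ attain the outer infimum and set $\gamma=1/\lambda^*$. Because the term $\lambda\rho$ does not depend on $\theta$, at the fixed value $\lambda^*$ the minimizer over $\theta$ of $\Rdron$ coincides exactly with the minimizer of $\tfrac{1}{\gamma}\log\E_{\pdatan}[e^{\gamma\ell(z;\theta)}]$; the two objectives differ only by the $\theta$-independent additive constant $\lambda^*\rho=\rho/\gamma$. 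This exhibits a single $\gamma>0$, independent of $\theta$ (it depends only on $\rho$ and on the data through the outer minimization), for which the two minimization problems are equivalent.

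\textbf{Main obstacle.} The delicate step is the existence, finiteness, and positivity of the optimal multiplier $\lambda^*$, since $\lambda^*=0$ would force $\gamma\to\infty$ (an essential-supremum objective) and $\lambda^*=\infty$ would force $\gamma\to 0$ (plain ERM), neither of which yields the claimed form. I would control both endpoints using boundedness of the loss, $\ell\in[-M,M]$: as $\lambda\to\infty$, $\lambda\log\E_{\pdatan}[e^{\ell/\lambda}]\to\E_{\pdatan}[\ell]$ while $\lambda\rho\to\infty$, so the objective diverges and $\lambda^*<\infty$; as $\lambda\to 0^+$ the first term tends to $\max_i\ell_i$, so for $\rho$ in the regime of interest the infimum is attained at an interior $\lambda^*>0$. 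I would also note the modest role of the uniqueness-of-points hypothesis in the appendix statement: it only serves to make the per-atom reweighting $p_i\propto e^{\gamma\ell_i}$ well-defined as a weight vector (identical atoms leave the mass split ambiguous), and is not otherwise needed for the dual computation above. Finally, I would flag the honest caveat that ``can be rewritten as'' is to be read at the level of \emph{argmin}: the objective values differ by the constant $\rho/\gamma$, which is immaterial for the optimization (and for the gradient used to derive \namel).
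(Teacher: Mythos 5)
Your proposal is correct, and it reaches the paper's conclusion by a related but genuinely distinct route. The paper re-derives the dual from scratch: it forms the Lagrangian of the KL-constrained supremum, uses the uniqueness of the data points to write $D(P'\|\pdatan)=\E_{P'}[\log dP']+\log n$, swaps $\sup_{P'}$ and $\inf_{\beta}$ by a concave--convex duality argument, and identifies the inner maximizer explicitly as the Gibbs distribution $P'(z)\propto e^{\ell(z,\theta)/\beta}$ before substituting back. You instead take the already-quoted Shapiro dual (Equation~\ref{eqn:dual_dro}) off the shelf, compute $f^*(s)=e^{s-1}$ for $f(x)=x\log x$, and eliminate $\eta$ in closed form to land on the same reduced objective $\inf_{\lambda\ge 0}\{\lambda\log\E_{\pdatan}[e^{\ell/\lambda}]+\lambda\rho\}$; both proofs then conclude identically by freezing $\gamma^{-1}=\lambda^*$ at the jointly optimal multiplier. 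Your route buys three things: (i) strong duality is inherited from the cited result, so no sup--inf swap needs justifying, and the interchange you do perform ($\min_\theta$ with $\inf_\lambda$) is a trivial joint infimum requiring no convexity in $\theta$; (ii) the uniqueness-of-points hypothesis genuinely drops out, since the $f$-divergence dual holds for any empirical measure --- though note its actual role in the paper is to enable the step $D(P'\|\pdatan)=\E_{P'}[\log dP']+\log n$, not (as you surmise) to disambiguate the reweighting; (iii) you are more careful on two points the paper elides: the rewriting holds at the level of minimizers and up to the $\theta$-independent constant $\rho/\gamma$ (the paper's final display likewise carries a $\theta$-independent term $-\beta(\log n-\rho)$ that is silently absorbed), and the existence and positivity of $\lambda^*$, which the paper assumes implicitly and you at least delimit via the endpoint analysis (your appeal to $\ell\in[-M,M]$ imports an assumption stated only in Proposition~\ref{prop:convergence}, but for an empirical measure the finiteness of $\max_i\ell_i$ at each $\theta$ suffices for the same argument). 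Incidentally, your version of the reduced dual with $+\lambda\rho$ is the cleanly normalized one; the paper's intermediate expression $\beta\log\E_{P'}[e^{\ell/\beta}]-\beta(\log n-\rho)$ only agrees with it if the expectation there is read as an unnormalized sum over atoms, a minor notational slip on the paper's side that your derivation avoids.
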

\begin{proof}
    
Recall the empirical DRO risk  $\Rdron(\theta)$ is defined as
\[
\Rdron(\theta) \coloneqq \sup_{P': D(P'||\pdatan) \leq \rho} \E_{P'}[\ell(z,\theta)]
\]

\def\E{\mathbb{E}}
Using Lagrangian duality, we rewrite $\Rdron(\theta)$ as
    \begin{align*}
         \sup_{P': D(P'||\pdatan) \leq \rho} \E_{P'}[\ell(z,\theta)] &= \sup_{P'} \inf_{\beta > 0}\E_{P'}[\ell(z,\theta)] - \beta ( D(P'||\pdatan) -\rho)\\ 
         & \stackrel{(a)}{=} \sup_{P'} \inf_{\beta > 0}\E_{P'}[\ell(z,\theta)] - \beta(\E_{P'}[\log{dP'}] +\log{n} - \rho),
    \end{align*}
    where $(a)$ follows from our choice of divergence, and the fact that the data points $\{z_i\}_{i=1}^{n}$ are all unique (i.e, no repetitions). 
Observe that the objective in the last expression is concave in $P'$ and linear in $\beta$. So the max-min problem above is concave-convex. Using Lagrangian duality to swap the order of min and max, we get
    \begin{align*}
         \sup_{P': D(P'||\pdatan) \leq \rho} \E_{P'}[\ell(z,\theta)] 
         &=  \inf_{\beta > 0}\sup_{P'} \E_{P'}[\ell(z,\theta)] - \beta(\E_{P'}[\log{dP'}] +\log{n} - \rho).
    \end{align*}
    This shows that minimizing $\Rdron$ is equivalent to the following problem
    \[
    \inf_{\beta > 0, \theta \in \Theta}\sup_{P'} \E_{P'}[\ell(z,\theta)] - \beta(\E_{P'}[\log{dP'}] +\log{n} - \rho).
    \]
    For any fixed $\beta, \theta$, the inner supremum is attained at a $P'$ that satisfies (see Theorem 1 of \cite{hsieh2019finding})
    \[
    P'(z) \propto \exp\left(\ell(z,\theta)/\beta\right).
    \]
    This can be derived using the following first order optimality condition: $\forall z, \ell(z,\theta) - \beta\log{P'(z)}-\beta = c$ for some constant $c$.
Substituting this in the previous equation, we get the following equivalent optimization problem
\[
\inf_{\beta > 0}\inf_{\theta \in \Theta} \beta\log\E_{P'}[e^{\ell(z,\theta)/\beta}] - \beta(\log{n} - \rho).
\]
Letting $\gamma^{-1}$ be the minimizer of the outer minimization problem, we get the required result.
\end{proof}
\subsection{Proof of Proposition~\ref{prop:convergence}}
\label{sec:proof2}

\begin{proof}
Note that whenever $\ell(z;)$ is convex, so is $f(\theta) := \E_{z\sim \pdatan} \exp(\gamma \ell(z;))$. It is easy to check that the function $f$ and the constraint set $\Theta$ satisfy the conditions in \citet[Theorem 2]{shamir2013stochastic}. From this, we conclude that:

\begin{equation}\label{eq:convergence}\E_{\theta_T}f(\theta_T) - \inf_{\theta \in \Theta}f(\theta) = O\left(\frac{\log T}{\sqrt{T}}\right)\end{equation}

The proof of~\eqref{eq:convergence} for the step size sequence $\eta_t = \frac{C}{\sqrt{t}}$ follows from the statement of \citet[Theorem 2]{shamir2013stochastic}. The case of the constant step-size $\eta_t = \frac{C}{\sqrt{T}}$ follows by a simple modification of the proof of \citet[Theorem 2]{shamir2013stochastic} where we substitute the appearance of $1/\sqrt{t}$ due to the step size with $1/\sqrt{T}$. We now convert the guarantees in Equation~\ref{eq:convergence} to guarantees in terms of $\log{f(\theta)}.$ Let $\theta^* \in \arg\inf_{\theta \in \Theta}f(\theta)$. By our assupmption, $\ell(z;)$ is bounded above and below.  So $\log f(\theta_T) - \log f(\theta^*) \leq \bar{C}(f(\theta_T)-f(\theta^*))$ for some $\bar{C}$. Combining this with~\eqref{eq:convergence}, we conclude the statement of the proposition.
\end{proof}

\subsection{Other Divergences} 
\label{sec:other_divergences}
\paragraph{$\chi^2$-divergence.} Consider $\chi^2$-divergence which is defined as
\[
D(P'||P) = \E_P\left[\left(\frac{dP'}{dP}-1\right)^2\right].
\]
We now follow a similar argument as in the proof of Proposition~\ref{prop:dual_dro_kl} to derive an equivalent expression for the DRO objective. 
We have
    \begin{align*}
         \sup_{P': D(P'||\pdatan) \leq \rho} \E_{P'}[\ell(z,\theta)] &= \sup_{P'} \inf_{\beta > 0}\E_{P'}[\ell(z,\theta)] - \beta ( D(P'||\pdatan) -\rho)\\ 
         & \stackrel{(a)}{=} \sup_{P'} \inf_{\beta > 0}\E_{P'}[\ell(z,\theta)] - \beta(\E_{P'}[dP'/d\pdatan] - 1-\rho)\\
         & \stackrel{(b)}{=} \sup_{P'} \inf_{\beta > 0}\E_{P'}[\ell(z,\theta)] - \beta(n\E_{P'}[dP'] - 1-\rho)\\
         & \stackrel{(c)}{=} \inf_{\beta > 0}\sup_{P'} \E_{P'}[\ell(z,\theta)] - \beta(n\E_{P'}[dP'] - 1-\rho),
    \end{align*}
    where $(a), (b)$ follow from the definition of the divergence and $(c)$ follows from Lagrangian duality.
Now, consider the DRO optimization problem
\begin{align*}
     \sup_{P': D(P'||\pdatan) \leq \rho} \E_{P'}[\ell(z,\theta)] &= \inf_{\beta > 0, \theta \in \Theta}\sup_{P'} \E_{P'}[\ell(z,\theta)] - \beta(n\E_{P'}[dP'] - 1-\rho).
\end{align*}
Suppose the loss $\ell$ is positive. For any fixed $\theta, \beta$, the inner supremum in the above optimization problem is attained at a $P'$ that satisfies 
\[
P'(z) \propto \left(\ell(z,\theta)/\beta n + 1\right).
\]
This follows from the first order optimality conditions. This gives rise to the re-weighting scheme $g(x) = x+\tau$, for some appropriately chosen $\tau$.
\paragraph{Reverse KL divergence.} The reverse KL-divergence is defined as
\[
D(P'||P) = \E_P\left[-\log{\frac{dP'}{dP}}\right].
\]
Using similar arguments as above, we can rewrite the DRO optimization problem as 
\begin{align*}
     \sup_{P': D(P'||\pdatan) \leq \rho} \E_{P'}[\ell(z,\theta)] &= \inf_{\beta > 0, \theta \in \Theta}\sup_{P'} \E_{P'}[\ell(z,\theta)] + \beta(\E_{P}[\log{dP'}] - \log{n}-\rho).
\end{align*}
For any fixed $\theta, \beta$, the inner supremum in the above optimization problem is attained at a $P'$ that satisfies 
\[
P'(z) \propto \frac{1}{\tau - \ell(z,\theta)},
\]
for some appropriate $\tau$. This gives rise to the re-weighting scheme $g(x) =1/(\tau-x)$. We call this algorithm \namet. In practice, we modify this re-weighting function

It can be implemented using the following pseudocode in Jax.
\begin{lstlisting}
import jax.numpy as jnp
import jax

def rgd_t(loss, temp=alpha, reduce=True):
    <@\texttt{\textcolor{gray}{\# alpha >0.}}@>
  out = loss * (1 -
      jnp.clip(jax.lax.stop_gradient(loss), a_min=0, a_max=temp) / (temp + 1))** (-1)
  return out.sum() / len(out) if reduce else out
\end{lstlisting}

\section{Choice of divergence in \namel}
\label{appendix:difference}

\namet~is a more aggressive weighing scheme in comparison to \namee. This is fairly simple to show if you re-write both the reweighting techniques using Taylor series expansion. \namet~ multiplies the loss $l$ with $(1+l+l^2+\dots )$. Whereas, \namee~multiplies the loss $l$ with $(1+l+l^2/2! + l^3/3! + \dots )$. \namet~is a more aggressive weighing scheme than \namee, and the choice between the two schemes should depend on the problem. Some preliminary results on the class imbalance setting is depicted in Table~\ref{tab:choice_divergences}. For \namex~$g(u) = u + \tau$, we also clip $u$ to be $\min(u, \tau)$. Similarly, for \namet, $g(u) = \frac{1}{\tau-u}$, we set $\tau=1$ and clipped $u$ as $\min(u, t)/(t+1)$ where for all practical purposes - $t$ has a similar grid search and function as $\tau$ from other divergences.

Our search space and clipping involved the same grid search space as our RGD algorithm as described in the reproducibility statement.

\begin{table*}[hbt!]
\caption{Test Accuracy of ResNet-32 on Long-Tailed CIFAR-10, and CIFAR-100 dataset.}
\vskip 0.15in
\resizebox{\linewidth}{!}{
\begin{tabular}{lccccccc|ccccccc}
\toprule
Dataset&
      \multicolumn{7}{c}{\data{CIFAR-10}}  &
      \multicolumn{7}{c}{\data{CIFAR-100}}   \\ \cline{2-8} \cline{9-15}
      
Loss / Imbalance Factor&
 \data{200} &
  \data{100} &
  \data{50} &
  \data{20} &
  \data{10} &
  \data{1} &
  \textbf{Avg.} &
  \data{200} &
  \data{100} &
  \data{50} &
  \data{20} &
  \data{10} &
  \data{1} &
  \textbf{Avg.} \\
\cline{1-15}
\multicolumn{15}{l}{Cross Entropy (CE)}\\
\midrule
 Default  &   65.98  & 70.36  &  74.81 & 82.23  & 86.39 & 92.89 & 78.78 & 
34.84 &    38.32  & 43.85  & 51.14 & 55.71  & 70.50   & 49.06 \\
\namet~\textbf{(Ours)}     &    64.16  & 72.56  &  77.86 & 83.88  & 86.84 & 92.99 & 79.72 & 
36.22  & 39.87  & 43.74 & 51.86  & 56.9 & 70.80  & 49.90 \\
\namex~\textbf{(Ours)}    &   67.16  & 72.20  &  77.93 & 84.7 & 86.90 & 93.00 & 80.32 &  
35.96  & 39.70  &  43.88 & 51.29 & 56.92 & 70.73  &  49.75\\
\namee~\textbf{(Ours)}    &   \textbf{67.90}  & \textbf{73.75}  &  \textbf{79.63} & \textbf{85.44} & \textbf{88.00} & \textbf{93.27}  & \textbf{81.33} &  
\textbf{38.62}  & \textbf{41.89}  &  \textbf{46.40} & \textbf{53.48} & \textbf{58.5} & \textbf{71.30}  & \textbf{51.70} \\

\bottomrule
\end{tabular}}
\label{tab:choice_divergences}
\end{table*}

\begin{figure}[bt!]
\centering
\includegraphics[width=\linewidth]{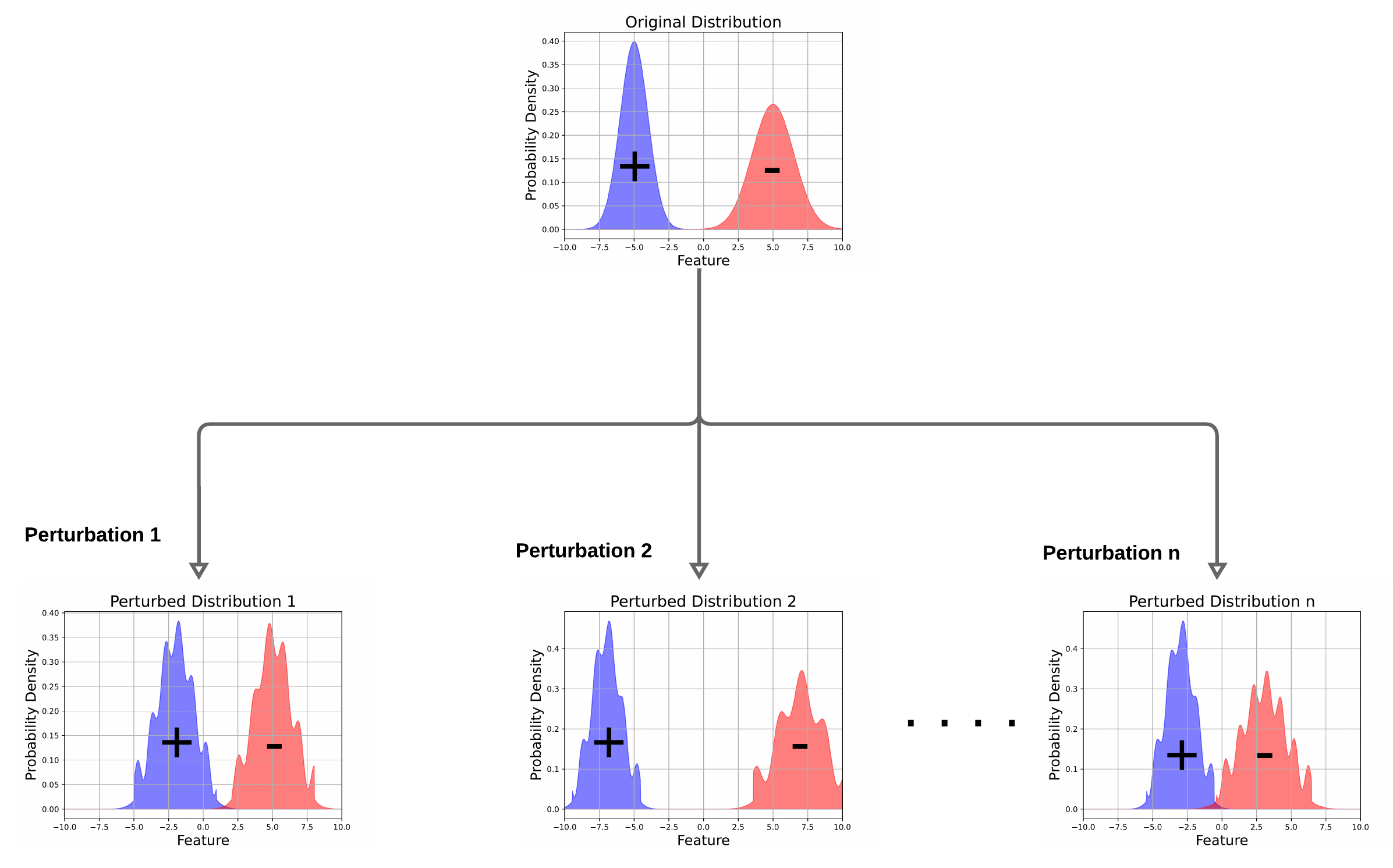}
\caption{Figure illustrating Distributionally Robust Optimization (DRO). In contrast to ERM which learns a model that minimizes expected loss over original data distribution, DRO learns a model that performs well simultaneously on several perturbed versions of the original data distribution.}
\label{fig:dro}
\end{figure}

\begin{figure}[bt!]
\centering
\includegraphics[width=0.6\linewidth]{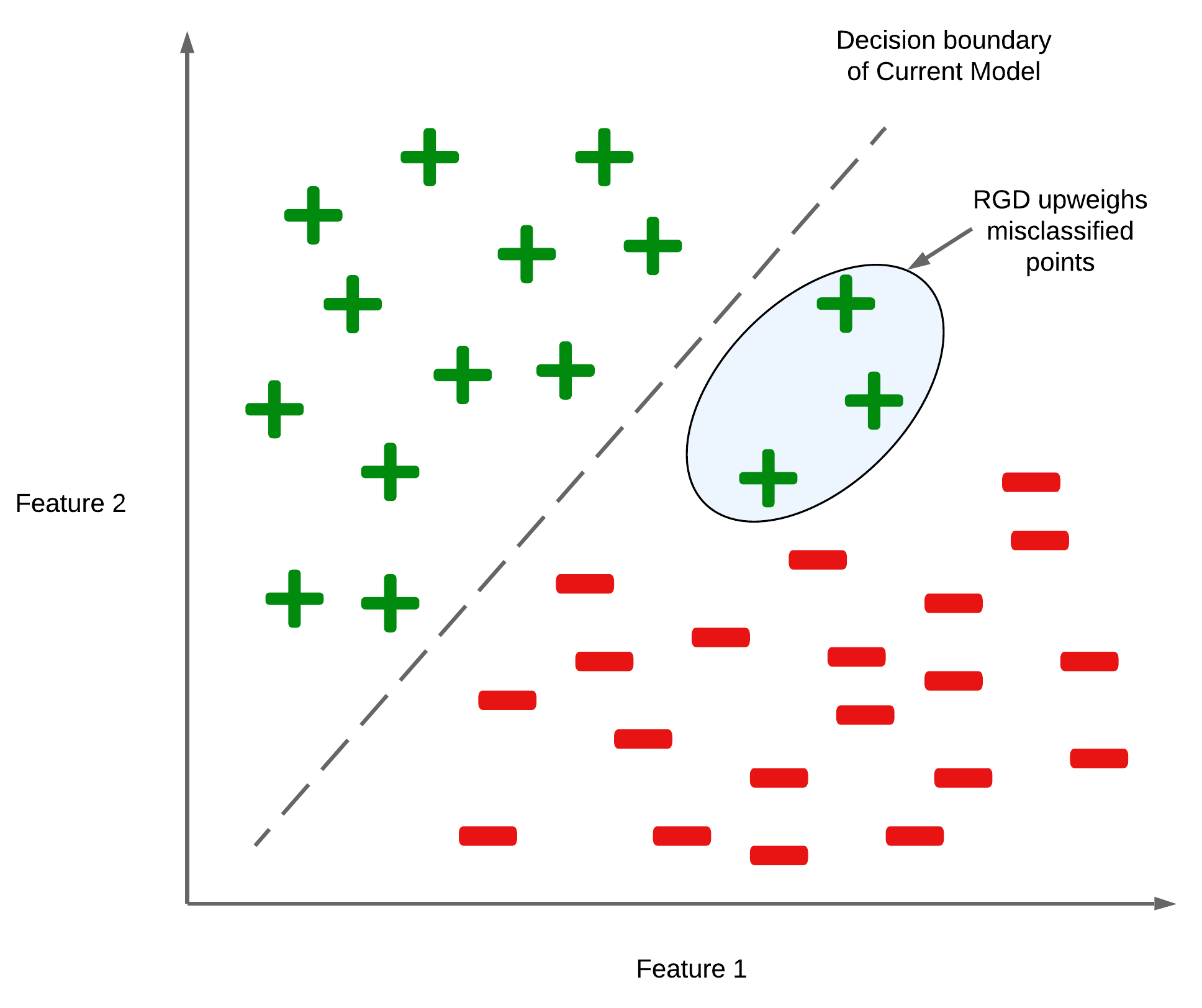}
\caption{Figure illustrating the intuitive idea behind the working of RGD in the binary classification setting. RGD upweights the points which have high losses - points which have been misclassified by the model.}
\label{fig:rgd}
\end{figure}

\section{Additional Experimental Results and Missing Details}
\label{appendix:addn_results}
This section provides additional experimental results and details that are missing in the main paper. The search space and hyperparameters tuned in our experiments of are depicted in Appendix~\ref{appendix:hyp_tuning}

\subsection{Hyperparameter Tuning}
\label{appendix:hyp_tuning}
In this section, we describe the common hyperparameter tuning space used across all experiments in our paper unless otherwise mentioned. The two parameters we tune were $\tau$ and $\texttt{lr}$. We use a simple grid search for $\tau$ in the order of $[1,3, 5, 7, 9]$ across the experiments where the scaling factor ($\gamma$) is by default set as $\frac{1}{\tau+1}$. This allowed our loss to be bounded between 0,1 and helped fairly compare \namex and \namel. The $\texttt{lr}$ was tuned by a proxy of $\texttt{lr\_mult}$ where we scaled the learning rate by a fraction in the range $[0.5,1.5]$. The effect of these hyperparameters ($\tau$, $\gamma$) is further depicted in Section~\ref{sec:ablation}.

\subsubsection{Existing KL-DRO techniques}
\label{appendix:ablation}

For TERM, we use the batch (non-hierarchical) version (as shown in Algorithm 1 of \cite{beirami2023tilted}) - requiring two degrees of hyperparameters (tilting coefficient $t$ and the learning rate ($lr$)). For the tilting coefficient, we use a search space of \cite{li2020tilted}: $\{0.2, 0.5, 1 , 3, 5\}$. For the learning rate, we use a lr multiplier to the baseline run as $\{0.7, 0.8, 0.9, 1, 1.1, 1.2, 1.3\}$. For the stochastic version of TERM, which is identical to ABSGD \citep{qi2020attentional} baseline - requires an additional coefficient of moving average ($\beta$). We use a grid search space of $\{0.25, 0.5, 0.75\}$ for tuning $\beta$, and tune the learning rate in a similar fashion to TERM. We also tune $\lambda$ (similar to the tilting coefficient of TERM) in the search space $\{1, 3, 5, 7\}$.

\subsection{Toy Experiments}
\label{sec:toy_example} 
In this section, we perform a simple experiment to better understand the robustness properties of our proposed approach.

\textbf{Linear Regression with rare features:}  We consider a linear regression problem where the covariates $x$ are sampled from  the set $\{h(0), \dots h(9)\}$. Here $h(\cdot)$ is a one-hot encoder that maps its inputs to a 10-dimensional vector. The label $y$ is generated according to the following linear model: $y = x^T\theta^*$, where $\theta^* \in \mathbb{R}^{10}$ is the regression vector which is sampled from a standard normal distribution. We construct an imbalanced dataset of tuples $\{(x_i, y_i)\}_{i=1}^n$ by taking 50 occurrences of the covariates $\{h(0)\dots h(4)\}$ and only one occurrence of the remaining five covariates. We consider two algorithms for learning the unknown parameter vector $\theta^*$: (i) SGD on the mean squared error in predictions (MSE) (ii) \namel~on the MSE loss. We set the step size to be $4$ for both algorithms and plot the evolution of  MSE and the Euclidean distance between the iterates ($\theta$) and the true parameter ($\theta^{*}$) (Figure~\ref{fig:toy_fixedlr}). It can be seen that our method achieves better performance due to prioritization of samples with higher loss, which corresponds to rare directions in the dataset. 

\begin{figure*}[hbt!]
\centering
\subfigure[MSE loss convergence]{%
\label{fig:mse_loss_lr4}
\includegraphics[width=0.30\linewidth]{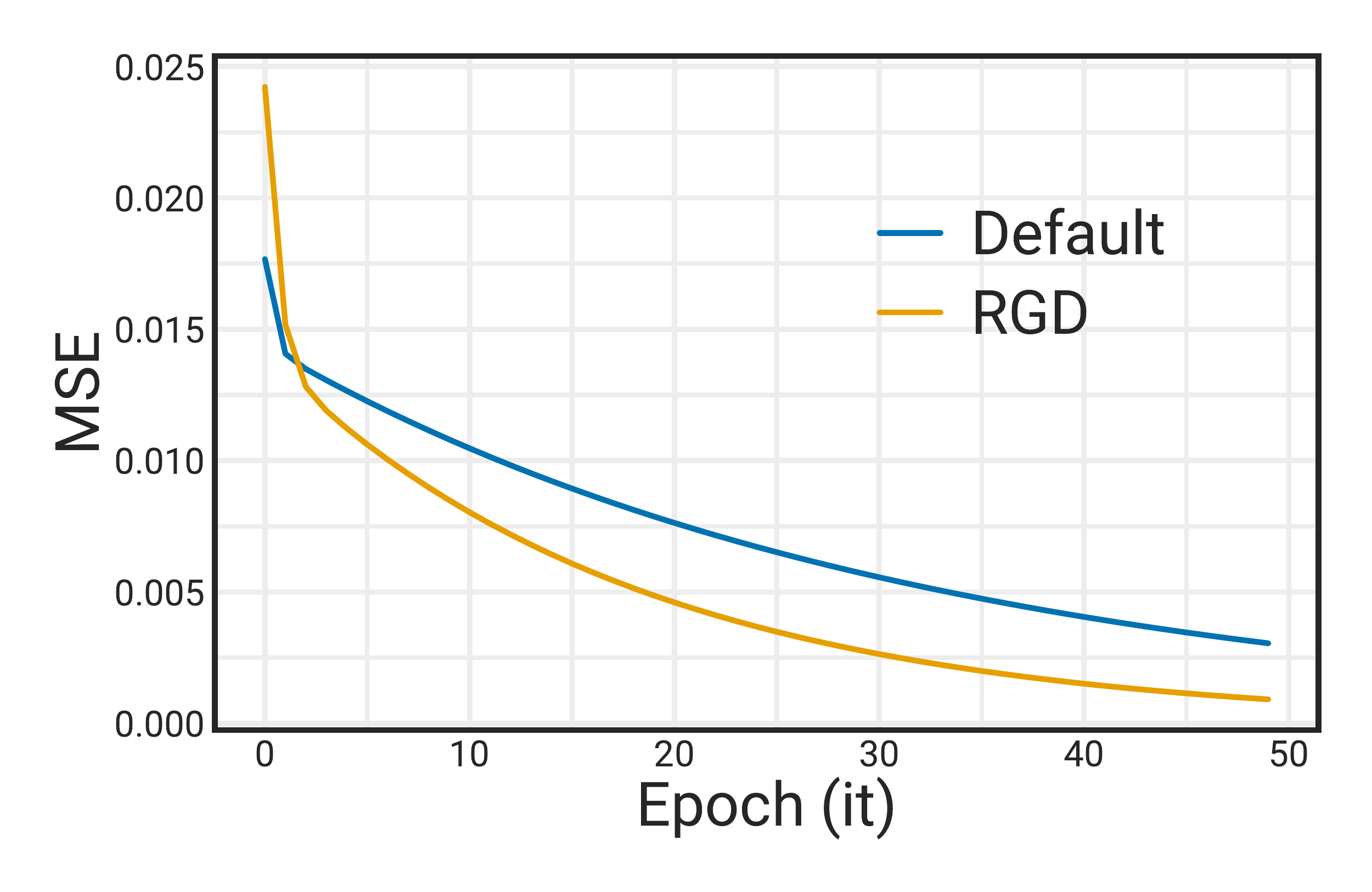}}
\quad
\subfigure[L2 Distance (Frequent feature)]{%
\label{fig:lr_4_good}
\includegraphics[width=0.30\linewidth]{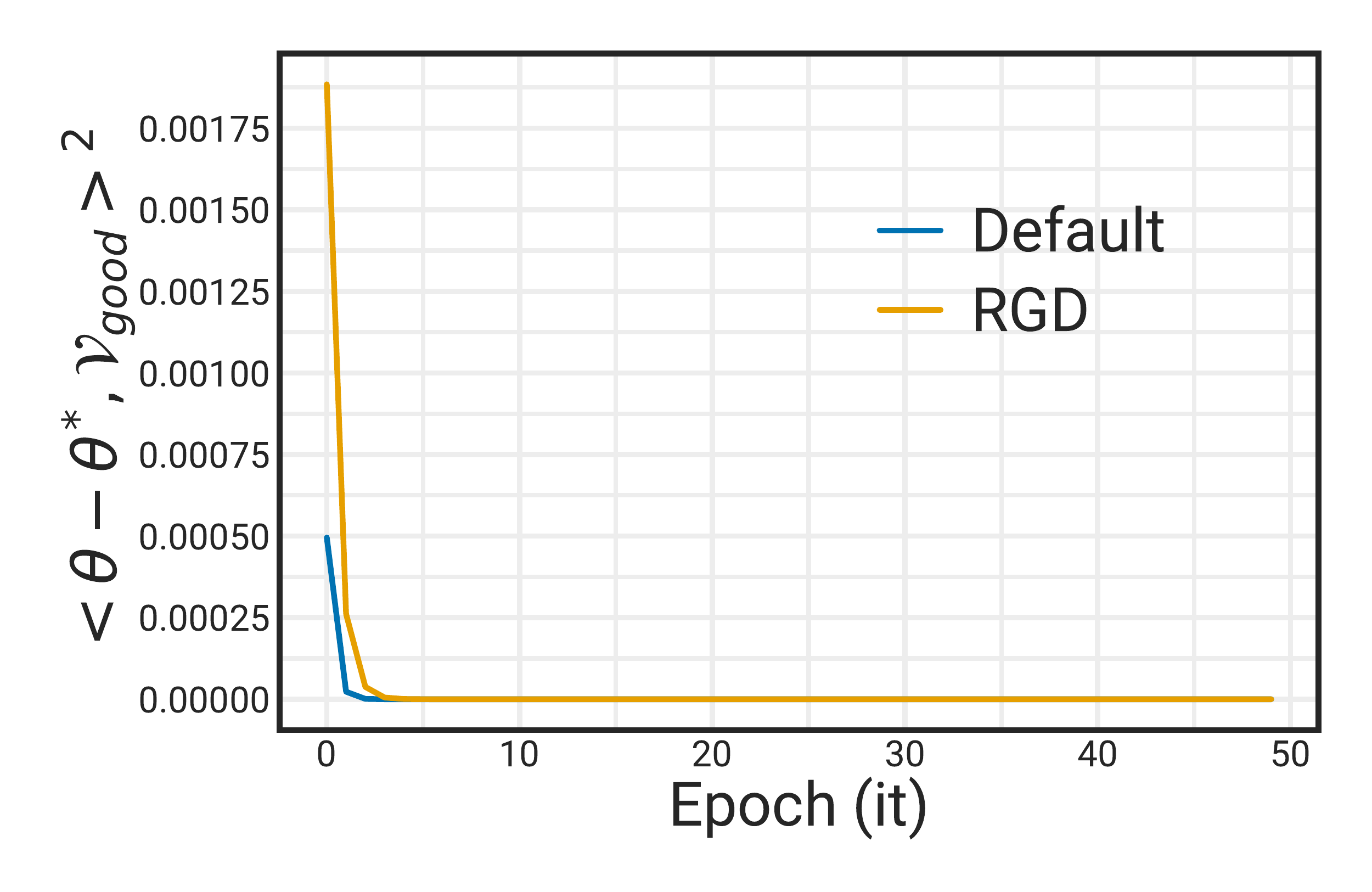}}
\quad
\subfigure[L2 Distance (Rare feature)]{%
\label{fig:lr_4_bad}
\includegraphics[width=0.30\linewidth]{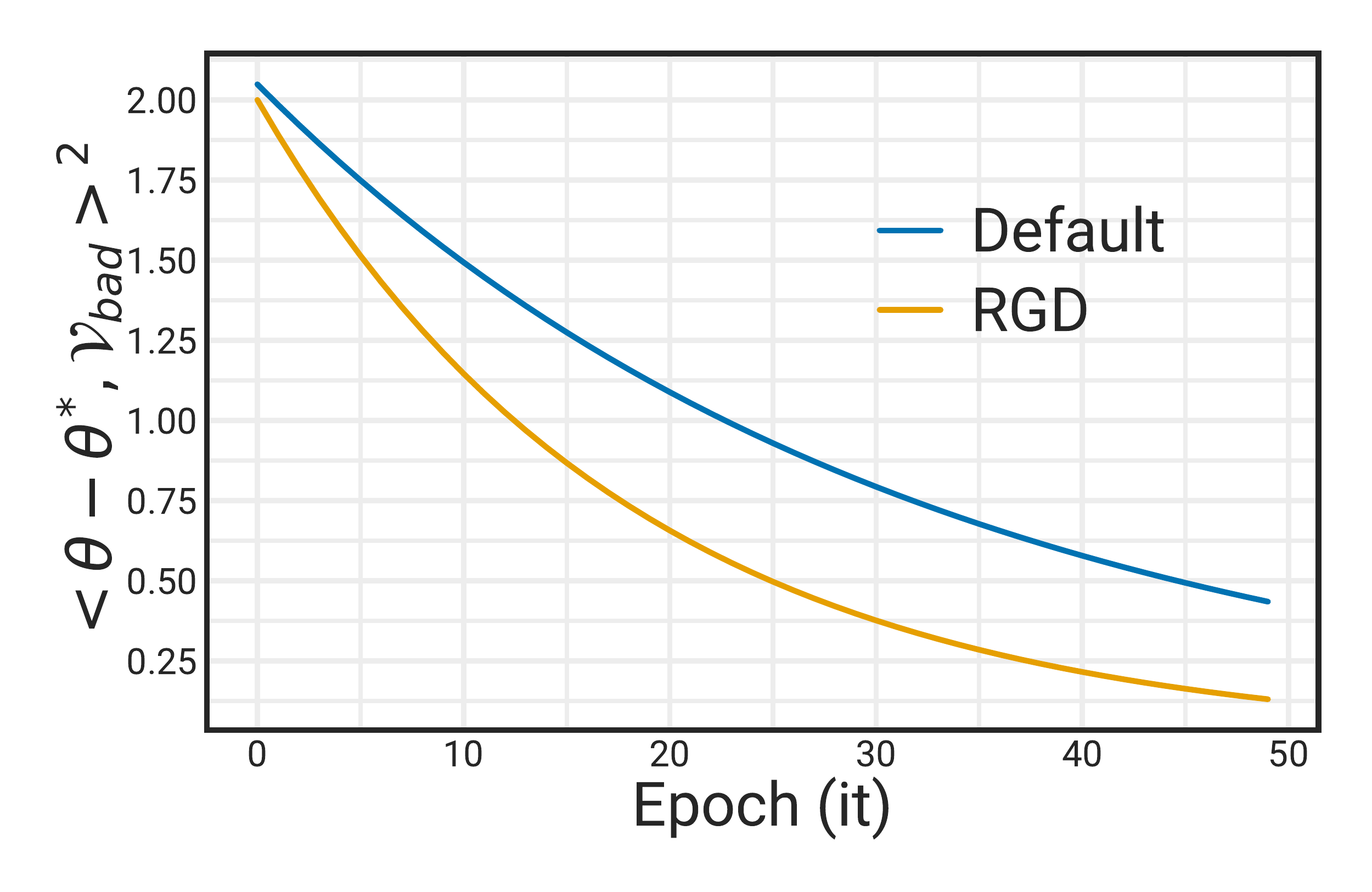}}

\caption{Figure~\ref{fig:mse_loss_lr4} showing the convergence of SGD, $\namel$ algorithms for estimating the linear regression parameter. The L2 distance between the iterates ($\theta$) and the true parameter $\theta^*$ is studied in \cref{fig:lr_4_good,fig:lr_4_bad}. Specifically, Figure~\ref{fig:lr_4_good} depicts the squared error in the frequently appearing directions, where all the techniques perform equally well. However, when it comes to learning rare directions, our proposed approach is much better (Figure~\ref{fig:lr_4_bad}).}
\label{fig:toy_fixedlr}
\end{figure*}

\subsection{Comparisons against Sharpness Aware Minimization}
\label{sam_comparison}

Below we compare with Sharpness-Aware Minimization (SAM)~\citep{foretsharpness} both from theoretical and empirical perspectives. From a theoretical perspective, SAM performs robust optimization in the weight space (that is SAM tries to learn a model that is robust to perturbations of weights). In contrast, RGD performs robust optimization in the distribution space. So, RGD and SAM are orthogonal to each other and can potentially be merged together to boost the performance. In Table~\ref{tab:rgd_sam_ablation}, we compare performance of various approaches for CIFAR-10 optimization. It can be seen that RGD marginally boosts the performance of SAM. Although, more thorough experiments are needed to understand the utility of RGD on top of SAM.

\begin{table}[hbt!]
\vspace{-0.2in}
\centering
\caption{Comparison of \namel~with SAM. Accuracy performance numbers have been reported over an average of 5 seeds. }
\vskip 0.1in
\resizebox{0.6\linewidth}{!}{
\begin{tabular}{lccc}
\toprule
\textbf{Algorithm} & SGD & SAM & SAM + RGD  \\
\midrule
Accuracy   & 96.80 \std{0.04}  & 97.31 \std{0.05}  & \textbf{97.41} \std{0.09}   \\
\bottomrule
\end{tabular}}
\label{tab:rgd_sam_ablation}
\vspace{-0.1in}
\end{table}

\subsection{Class Imbalance Experiments}
\label{inbalanced_cifar_additional_results}

\begin{figure*}[bt!]
\centering
\includegraphics[width=\linewidth]{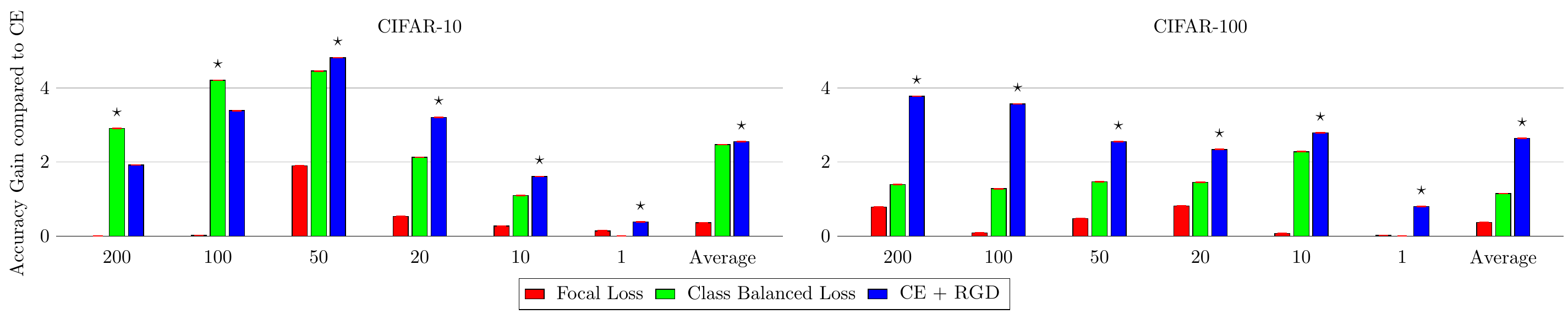}
\caption{Experiment comparing \namel~with baseline cross entropy Loss (CE), focal loss and class-balanced loss using a ResNet-32 backbone. $x$-axis represents the imbalance factor in the dataset.}
\label{fig:cifar_imbalaned}
\end{figure*}

This section briefly discusses additional results from our experiments on the Class Imbalance domain with imbalanced CIFAR-10 and CIFAR-100 datasets. 
It is well known that DRO outputs models with good tail performance~\citep{duchi2018learning}. Since \namel~directly solves the DRO objective, our models are also naturally endowed with this property.
To demonstrate this, we extend our experiments on linear regression to a more realistic image dataset, where some classes appear very rarely in the data set while some appear very frequently. We use the Long-Tailed CIFAR dataset, where we reduce the number of training samples per class according to an exponential function as proposed by \cite{cui2019class}. We define the imbalance factor of a dataset as the number of training samples in the largest class divided
by the smallest. Similar to the works of \cite{shu2019meta}, we use a ResNet-32 architecture for training.
Apart from Cross Entropy loss, we also include Focal Loss \citep{lin2017focal} and Class Balanced Loss \citep{cui2019class} as additional baselines.  We also experimented with the long-tailed CIFAR-100 dataset and showed that our proposed approach could again show significant improvements. Figure~\ref{fig:cifar_imbalaned} illustrates the performance of our approach in comparison to other state-of-the-art methods. Overall, in comparison to the SOTA approach in this task (Class Balanced Loss), our proposed approach brings about an improvement of \textcolor{blue}{\textbf{+0.79\%}}. 
A more comprehensive comparison with additional state-of-the-art baselines such as L2RW \citep{ren2018learning}, and Meta-Weight-Net \citep{shu2019meta} is illustrated in Table~\ref{tab:long_tailed_cifar_more_info}. Although these models use additional data as a meta-validation-set, our proposed approach outperforms L2RW and is roughly competitive with the Meta-Weight-Net model. 

Table~\ref{tab:long_tailed_cifar} depicts the accuracy metric of models on various levels of the imbalance factor. From Table~\ref{tab:long_tailed_cifar}, we show that our proposed approach \namee~outperforms other baselines such as Focal Loss and Class Balanced Loss by \textcolor{blue}{\textbf{+0.79\%}}. Furthermore, when models are trained on additional data, either by fine-tuning or by using a meta-learning framework to learn weights (such as Meta-Weight-Net and L2RW), we show that our proposed approach is competitively similar (\textcolor{blue}{\textbf{-0.22\%}}). Table~\ref{tab:long_tailed_cifar_more_info} illustrates this analysis further. The performance metrics of the baseline approaches were taken from \cite{shu2019meta}. Additional comparisons against other losses, such as cross-entropy with label smoothing and large margin softmax loss, are shown in Table~\ref{tab:lmargin_imbalance}.

For prior KL-DRO benchmarks such as TERM, we perform a grid-search where we tune the tilting coefficient ($t$) in the space $\{0.1, 0.3, 0.5, 0.7, 1, 2, 5\}$. Furthermore, we also tune the learning rate in the space $[5e-3, 1]$. Furthermore, for ABSGD, we replicate the baseline numbers from their paper which performs the same setup of experiments as us.

\begin{table*}[hbt!]
\caption{Test Accuracy of ResNet-32 on Long-Tailed CIFAR-10, and CIFAR-100 dataset.}
\vskip 0.15in
\resizebox{\linewidth}{!}{
\begin{tabular}{lccccccc|ccccccc}
\toprule
Dataset&
      \multicolumn{7}{c}{\data{CIFAR-10}}  &
      \multicolumn{7}{c}{\data{CIFAR-100}}   \\ \cline{2-8} \cline{9-15}
      
Loss / Imbalance Factor&
 \data{200} &
  \data{100} &
  \data{50} &
  \data{20} &
  \data{10} &
  \data{1} &
  \textbf{Avg.} &
  \data{200} &
  \data{100} &
  \data{50} &
  \data{20} &
  \data{10} &
  \data{1} &
  \textbf{Avg.} \\
\cline{1-15}
Focal Loss \cite{lin2017focal}& 65.29 & 70.38 & 76.71 &82.76 & 86.66 & 93.03 & 79.14
 & 35.62 & 38.41 & 44.32 & 51.95 & 55.78 & 70.52 & 49.43 \\
Class Balanced Loss \cite{cui2019class} & \textbf{68.89} & \textbf{74.57} &79.27 &84.36& 87.49& 92.89& 81.25 &
36.23& 39.60& 45.32& 52.59& 57.99& 70.50& 50.21\\
\midrule
\multicolumn{15}{l}{Cross Entropy (CE)}\\
\midrule
 Default  &   65.98  & 70.36  &  74.81 & 82.23  & 86.39 & 92.89 & 78.78 & 
34.84 &    38.32  & 43.85  & 51.14 & 55.71  & 70.50   & 49.06 \\
\namet~\textbf{(Ours)}     &    64.16  & 72.56  &  77.86 & 83.88  & 86.84 & 92.99 & 79.72 & 
36.22  & 39.87  & 43.74 & 51.86  & 56.9 & 70.80  & 49.90 \\
\namee~\textbf{(Ours)}    &   67.90  & 73.75  &  \textbf{79.63} & \textbf{85.44} & \textbf{88.00} & \textbf{93.27}  & \textbf{81.33} &  
\textbf{38.62}  & \textbf{41.89}  &  \textbf{46.40} & \textbf{53.48} & \textbf{58.5} & \textbf{71.30}  & \textbf{51.70} \\

\bottomrule
\end{tabular}}
\label{tab:long_tailed_cifar}

\end{table*}

\begin{table*}[hbt!]
\caption{Test Accuracy of ResNet-32 on Long-Tailed CIFAR-10, and CIFAR-100 dataset. We use the symbol \textcolor{red}{$\star$} to denote approaches that use additional data (as the meta-dataset). We use $\underline{underline}$ symbol to depict performances which are second-best across baselines. Our experiments show that we can get competitively similar performance to such models as well without training a second neural network.}
\vskip 0.15in
\resizebox{\linewidth}{!}{
\begin{tabular}{lccccccc|ccccccc}
\toprule
Dataset&
      \multicolumn{7}{c}{\data{CIFAR-10}}  &
      \multicolumn{7}{c}{\data{CIFAR-100}}   \\ \cline{2-8} \cline{9-15}
      
Loss / Imbalance Factor&
 \data{200} &
  \data{100} &
  \data{50} &
  \data{20} &
  \data{10} &
  \data{1} &
  \textbf{Avg.} &
  \data{200} &
  \data{100} &
  \data{50} &
  \data{20} &
  \data{10} &
  \data{1} &
  \textbf{Avg.} \\
\cline{1-15}
Fine-tuning \textcolor{red}{$\star$} & 66.08 & 71.33 &77.42 &83.37 &86.42 &\underline{93.23} & 79.64 & \underline{38.22} & 41.83& 46.40& 52.11& 57.44& 70.72& 51.12\\
L2RW \cite{ren2018learning} \textcolor{red}{$\star$} & 66.51& \underline{74.16} & 78.93& 82.12& 85.19& 89.25& 77.69& 33.38& 40.23 &44.44 &51.64 &53.73& 64.11& 47.92\\
Meta-Weight-Net \cite{shu2019meta} \textcolor{red}{$\star$} & \textbf{68.91} & \textbf{75.21} & \textbf{80.06} &\underline{84.94} & \underline{87.84} &92.66& \textbf{81.60} &37.91 &\textbf{42.09} & \textbf{46.74} & \textbf{54.37} & \underline{58.46} &  \underline{70.37} & \underline{51.65} \\
\midrule
\multicolumn{15}{l}{Cross Entropy (CE)}\\
\midrule
  Default  &   65.98  & 70.36  &  74.81 & 82.23  & 86.39 & 92.89 & 78.78 & 
34.84 &    38.32  & 43.85  & 51.14 & 55.71  & 70.50   & 49.06 \\
\namet~\textbf{(Ours)}     &    64.16  & 72.56  &  77.86 & 83.88  & 86.84 & 92.99 & 79.72 & 
36.22  & 39.87  & 43.74 & 51.86  & 56.9 & 70.80  & 49.90 \\
\namee~\textbf{(Ours)}   &   \underline{67.90}  & 73.75  &  \underline{79.63} & \textbf{85.44} & \textbf{88.00} & \textbf{93.27}  & \underline{81.33} &  
\textbf{38.62}  & \underline{41.89}  &  \underline{46.40} & \underline{53.48} & \textbf{58.5} & \textbf{71.30}  & \textbf{51.70} \\

\bottomrule
\end{tabular}}
\label{tab:long_tailed_cifar_more_info}

\end{table*}

\begin{table*}[hbt!]
\caption{Additional Ablation study comparing with Label smoothing and Large Margin Softmax Loss. Test Accuracy of ResNet-32 on Long-Tailed CIFAR-10, and CIFAR-100 dataset.}
\vskip 0.15in
\resizebox{\linewidth}{!}{
\begin{tabular}{lccccccc|ccccccc}
\toprule
Dataset&
      \multicolumn{7}{c}{\data{CIFAR-10}}  &
      \multicolumn{7}{c}{\data{CIFAR-100}}   \\ \cline{2-8} \cline{9-15}
      
Loss / Imbalance Factor&
 \data{200} &
  \data{100} &
  \data{50} &
  \data{20} &
  \data{10} &
  \data{1} &
  \textbf{Avg.} &
  \data{200} &
  \data{100} &
  \data{50} &
  \data{20} &
  \data{10} &
  \data{1} &
  \textbf{Avg.} \\
\cline{1-15}
\multicolumn{15}{l}{Cross Entropy (CE)}\\
\midrule
 + Label Smoothing  &   61.22  & 73.80  &  77.95 & 84.40  & 86.96 & 92.18 & 79.42 & 
37.14 & 41.05  & 44.76  & 50.67 & 57.74  & 70.97 & 50.39 \\
+ Large Margin Softmax (LMS)  &  \textbf{68.67} & 72.78 &  78.84 & 85.23  & \textbf{88.26} & 92.75 & 81.09 & 
36.77 &   40.38  & 45.24  & 51.25  & 56.9  & 71.01 & 50.26 \\
\namee~\textbf{(Ours)}   &   67.90  & \textbf{73.75}  &  \textbf{79.63} & \textbf{85.44} & 88.00 & \textbf{93.27}  & \textbf{81.33} &  
\textbf{38.62}  & \textbf{41.89}  &  \textbf{46.40} & \textbf{53.48} & \textbf{58.5} & \textbf{71.30}  & \textbf{51.70} \\

\bottomrule
\end{tabular}}
\label{tab:lmargin_imbalance}

\end{table*}

\subsection{Vanilla Classification}
\label{appendix:supervised_learning}

This section briefly discusses a few additional results from our experiments on standard supervised learning (in particular vanilla classification). Table~\ref{tab:bert_glue_additional} depicts the performance of our other variant \namet~in comparison to the baseline approach. 

\paragraph{EfficientNet finetuning.} We also show fine-tuning improvements of EfficientNet-v2-l over various tasks such as Cars and Food101 as depicted in Table~\ref{tab:vision_results}. In these experiments, we take a pre-trained EfficientNet backbone and fine-tune it for various tasks.

\paragraph{DieT-S for ImageNet-1K classification.} Similarly, we also report that \namel~boosts the performance of the baseline DeiT-S model by $0.1\%$ when trained from scratch on the Imagenet-1K benchmark as depicted in Table~\ref{tab:vision_results}. Note that similar to our setup in ViT-S on Imagenet-1K benchmark, we perform no tuning, and simply use $\tau=1$, and same learning rate as baseline.

\paragraph{MLP for classification.}  We also demonstrate that our proposed approach is simple and shows significant improvements, not only for SOTA approaches but also basic MLP procedures as depicted in Table~\ref{tab:met_multi_label_pure}, Table~\ref{tab:met_binary_accuracy_pure}, and Table~\ref{tab:met_binary_auroc_pure}. These tables help showcase that the simple addition of our proposed approach does show significant improvements of \textcolor{blue}{\textbf{+2.77\%}} (in accuracy) on multi-class and \textcolor{blue}{\textbf{+1.77\%}} (in AUROC) on binary class tasks respectively.

\begin{table*}[hbt!]
\centering
\caption{Additional Ablation study to showcase the gain achieved by using RGD for various tasks in GLUE benchmark for bert-base-uncased.}
\vskip 0.15in
\resizebox{0.6\linewidth}{!}{
\begin{tabular}{ccccccccc}
\toprule
\data{bert-base-uncased} & \data{MNLI}           & \data{QQP}            & \data{QNLI}           & \data{SST-2}          & \data{MRPC}           & \data{RTE}            & \data{COLA}           & \data{\textbf{Avg}}            \\
\midrule
Default           & 81.33          & 89.62          & 87.93          & 90.63          & \textbf{89.55} & 67.19          & 54.53          & 80.11          \\
\namet~\textbf{(Ours)}           & 82.97 & 89.87 & \textbf{90.79} & 91.28 & 88.54 & 71.23 & 59.28 & 81.97 \\
\namee~\textbf{(Ours)}           & \textbf{83.06} & \textbf{91.06} & 90.35 & \textbf{91.78} & 88.28          & \textbf{71.48} & \textbf{58.56} & \textbf{82.05} \\

\bottomrule
\end{tabular}}
\label{tab:bert_glue_additional}

\end{table*}

\begin{table*}[hbt!]
\caption{Ablation study to showcase the gain achieved by using RGD for various tasks in Vision benchmarks for ViT-S on Imagenet-1K, DeiT-S on Imagenet-1K and EfficientNet-v2-l on Finetuning tasks.}
\centering
\vskip 0.15in
\resizebox{0.6\linewidth}{!}{
\begin{tabular}{ccccc}
\toprule
{} &\data{Imagenet-1K (ViT-S)} &\data{Imagenet-1K (DeiT-S)}           & \data{Cars-FineTuning}  & \data{Food101-FineTuning}     \\
\midrule
Default           & 78.1  & 80.05     & 92.03   & 92.65     \\
\namee~\textbf{(Ours)}           & \textbf{79.0} & \textbf{80.13} & \textbf{92.62} & \textbf{92.75}   \\

\bottomrule
\end{tabular}}
\label{tab:vision_results}

\end{table*}

\paragraph{MiniGPT Pre-training.} We extend our work to large-scale tasks in NLP such as LLM pre-training which has become more prevalent over the recent years. Our experiments on  miniGPT pre-training, as illustrated in this section, showcase the efficacy of our approach in these settings. miniGPT~\citep{zhu2023minigpt} is a minimal implementation of a decoder-only transformer language model. We consider a 6 layer model and train on the lm1B small dataset which has 1B tokens. We trained for 100K steps with a batch size of 256. We used the default learning rate of 0.0016 for the baseline. For RGD, we fix the clipping threshold to 1 and tune the learning rate. We achieved 1\% improvement on the eval log-perplexity score. Table~\ref{tab:minigpt_train} illustrates our results in this setting.

\begin{table}[]
\centering
\caption{Results on standard Lm1b pre-training task with miniGPT. Off-the-hat addition of \namel~ leads to improvements of evaluation log-perplexity by $1\%$.}
\vskip 0.15in
\resizebox{0.4\linewidth}{!}{
\begin{tabular}{lcc}
\toprule
\textbf{Dataset} &
 \data{Lm1b eval Perplexity}\\
\midrule
Default      & 2.9218  \\
\namee~\textbf{(Ours)}     & 	\textbf{2.8979}   \\
\bottomrule
\end{tabular}}
\label{tab:minigpt_train}

\end{table}

\subsection{Tabular Classification}
\label{tabular_additional_results}

This section discusses a few additional results from our experiments on Tabular classification. Table~\ref{tab:met_multi_label_additional} depicts our proposed approach's accuracy compared to other baselines on multi-class tabular datasets. Our method outperforms previous SOTA in this problem by \textcolor{blue}{\textbf{+1.27\%}}. Furthermore, Table~\ref{tab:met_binary_auroc_additional} illustrates the AUROC score of our proposed approach in comparison to state-of-the-art baselines on binary-class tabular datasets. Our approach shows an improvement of \textcolor{blue}{\textbf{+1.5\%}} in this setting as well. The performance metrics of the baseline approaches were taken from \cite{majmundar2022met}. 

\begin{table}[]
\centering
\caption{Results on standard multi-class tabular datasets (Accuracy): The bottom partition shows results of our method with \namee\ loss. We show that the addition of our proposed approach significantly outperforms the base MLP model.}
\vskip 0.15in
\resizebox{0.6\linewidth}{!}{
\begin{tabular}{lccccc}
\toprule
\textbf{Algorithm} &
 \data{FMNIST} &
  \data{CIFAR10} &
  \data{MNIST} &
  \data{CovType} &
  \textbf{Avg.} \\
\midrule
Default \cite{majmundar2022met}   & 87.62       & 16.50          & 96.95         & 65.47       & 66.64  \\

\namet~\textbf{(Ours)} &  88.52     & 20.31    & \textbf{97.86}   & 68.81 &  68.875 \\
\namee~\textbf{(Ours)} &  \textbf{89.03} &   \textbf{21.32}   &   97.57 &  \textbf{69.73}  &  \textbf{69.41} \\

\bottomrule
\end{tabular}}
\label{tab:met_multi_label_pure}

\end{table}

\begin{table}[]
\centering
\caption{Results on standard binary-class tabular datasets (Accuracy): The bottom partition shows results of our method with \namee\ loss. We show that the addition of our proposed approach significantly outperforms the base MLP model.}
\vskip 0.15in
\resizebox{0.6\linewidth}{!}{
\begin{tabular}{lccccc}
\toprule
\textbf{Algorithm} &
 \data{Obesity} &
  \data{Income} &
  \data{Criteo} &
  \data{Thyroid} &
  \textbf{Avg.} \\
\midrule
Default      & 58.1      & 84.36        & 74.28 & 50 & 66.69 \\
\namet~\textbf{(Ours)}  &  59.12     &  84.5   &  75.06   &  \textbf{57.3} & 69  \\
\namee~\textbf{(Ours)}  &  \textbf{58.83} &  \textbf{85.5}  &   \textbf{75.87}   & 56.04 & \textbf{69.1}   \\
\bottomrule
\end{tabular}}
\label{tab:met_binary_accuracy_pure}

\end{table}

\begin{table}[]
\centering
\caption{Results on standard binary-class tabular datasets (AUROC): The bottom partition shows results of our method with \namee\ loss. We show that the addition of our proposed approach significantly outperforms the base MLP model.}
\vskip 0.15in
\resizebox{0.6\linewidth}{!}{
\begin{tabular}{lccccc}
\toprule
\textbf{Algorithm} &
 \data{Obesity} &
  \data{Income} &
  \data{Criteo} &
  \data{Thyroid} &
  \textbf{Avg.} \\
\midrule
MLP      & 52.3  & 89.39 & 79.82 & 62.3 & 70.95 \\
\namet~\textbf{(Ours)} &  54.31     & \textbf{91.1}   & 79.85 & 62.25 & 71.88\\
\namee~\textbf{(Ours)} &  \textbf{55.96} &  90.8  &  \textbf{80.1}  &  \textbf{64} &  \textbf{72.72}\\
\bottomrule
\end{tabular}}
\label{tab:met_binary_auroc_pure}

\end{table}

\begin{table}[]
\centering
\caption{Results on standard multi-class tabular datasets (Accuracy): The bottom partition shows results of our method with \namel\ loss. We show that the addition of our proposed approach significantly outperforms existing methods, as well as SOTA.}
\vskip 0.15in
\resizebox{0.7\linewidth}{!}{
\begin{tabular}{lccccc}
\toprule
\textbf{Algorithm} &
 \data{FMNIST} &
  \data{CIFAR10} &
  \data{MNIST} &
  \data{CovType} &
  \textbf{Avg.} \\
\midrule
MLP      & 87.62       & 16.50          & 96.95         & 65.47       & 66.64  \\
RF \cite{breiman2001random}     & 88.43       & 42.73          & 97.62         & 71.37       & 75.04  \\
GBDT \cite{friedman2001greedy}  & 88.71       & 45.7          & \textbf{100}         & 72.96      & 76.84  \\
RF-G   \cite{rahimi2008weighted}   & 89.84       & 29.32         & 97.65         & 71.57       & 72.10  \\
MET-R  \cite{majmundar2022met}    & 88.84       & 28.94          & 97.44         & 69.68       & 71.23  \\
VIME  \cite{yoon2020vime}   & 80.36       & 34.00          & 95.77         & 62.80       & 68.23 \\
DACL+  \cite{verma2021towards}    & 81.40    & 39.70          & 91.40         & 64.23       & 69.18 \\
SubTab  \cite{ucar2021subtab}    & 87.59      & 39.34          & 98.31         & 42.36       & 66.90  \\
TabNet \cite{arik2019tabnet}    & 88.18      & 33.75          & 96.63        & 65.13      & 70.92 \\
MET \cite{majmundar2022met} & \textbf{91.68} & 47.82 & 99.19 & 76.71 & 78.85\\
 \midrule
\multicolumn{6}{l}{MET-S}\\
\midrule
Default \cite{majmundar2022met}   & 90.94       & 48.00          & 99.01         & 74.11       & 78.02 \\
\namet~\textbf{(Ours)} &  91.12     & 49.17    & 99.28   & 79.41 &  79.75 \\
\namee~\textbf{(Ours)} &  91.54 &      \textbf{49.54}   &   99.69 &  \textbf{79.72}  &  \textbf{80.12} \\
\bottomrule
\end{tabular}}
\label{tab:met_multi_label_additional}

\end{table}

\begin{table}[]
\centering
\caption{Results on standard binary-class tabular datasets (AUROC): The bottom partition shows results of our method with \namel\ loss. We show that the addition of our proposed approach significantly outperforms existing methods, as well as SOTA.}
\vskip 0.15in
\resizebox{0.7\linewidth}{!}{
\begin{tabular}{lccccc}
\toprule
\textbf{Algorithm} &
 \data{Obesity} &
  \data{Income} &
  \data{Criteo} &
  \data{Thyroid} &
  \textbf{Avg.} \\
\midrule
MLP      & 52.3      & 89.39       & 79.82              & 62.3 & 70.95 \\
RF \cite{breiman2001random}     & 64.36    & 91.53        & 77.57             & 99.62 & 83.27 \\
GBDT  \cite{friedman2001greedy}     & 64.4  & 92.5       & 78.77            & 99.34 & 83.75\\
RF-G  \cite{rahimi2008weighted}    & 54.45    & 90.09    & 80.32             & 52.65  & 69.37\\
MET-R \cite{majmundar2022met}     & 53.2   & 83.54 & 79.17         & 82.03  & 74.49 \\
VIME  \cite{yoon2020vime}  & 57.27    & 87.37    & 74.28   & 94.87 & 78.45\\
DACL+ \cite{verma2021towards}    & 61.18   & 89.01     & 75.32          & 86.63 & 78.04\\
SubTab \cite{ucar2021subtab}  & 64.92   & 88.95    & 76.57             & 88.93 & 79.00 \\
TabNet \cite{arik2019tabnet} & 69.40     & 77.30          & 80.91              & 96.98 & 81.15 \\
 \midrule
\multicolumn{6}{l}{MET-S}\\
\midrule
 Default \cite{majmundar2022met} & 71.84    & 93.85 & 86.17    & 99.81 & 87.92 \\
\namet~\textbf{(Ours)}  &  76.23     &  93.90   & 86.92 & 99.82 & 89.22  \\
\namee~\textbf{(Ours)}  &  \textbf{76.87} &  \textbf{93.96}       &  \textbf{86.98}  &  \textbf{99.92} & \textbf{89.43} \\
\bottomrule
\end{tabular}}
\label{tab:met_binary_auroc_additional}

\end{table}

\subsection{Meta-Learning}
\label{appendix:meta_learning}

This section discusses some additional results from our experiments in the meta-learning domain. Table~\ref{tab:meta_learning} depicts a complete table and comparison of our proposed approach on the MAML baseline compared to others. Overall, we notice improvement across the board, especially in the outliers, as shown in the Worst-k\% metrics. Note that although our \namel~has been applied on MAML~\citep{finn2017model} in our current experiments, our approach is analogous to the model and can be extended to other meta-learning techniques such as Protonet~\citep{snell2017prototypical}, CNAPs~\citep{requeima2019fast}, etc. as well. The performance metrics of the baseline approaches were taken from \citep{kumar2022effect}.

\begin{table}[]
\caption{Results on meta-learning datasets. We report the Worst-K\% performance as well to help study the performance distribution over all tasks. Overall, we expect our reweighting scheme to give more importance to those tasks which are difficult and rare. We show that the addition of our proposed approach significantly outperforms existing methods as shown in Omniglot 5-way 1-shot, as well as \textit{mini}ImageNet 5-way 1-shot setting.}
\vskip 0.15in
\resizebox{\linewidth}{!}{
\begin{tabular}{lcccccc}
\toprule
\textbf{Algorithm} &
 \data{Worst 10\%} &
 \data{Worst 20\%} &
  \data{Worst 30\%} &
  \data{Worst 40\%} &
  \data{Worst 50\%} &
  \data{Overall} \\
\midrule
\multicolumn{7}{l}{Omniglot 5-way 1-shot}\\
\midrule
MAML  & 91.71 \std{0.73}	& 94.16 \std{0.50}	& 95.41 \std{0.39} &	96.22 \std{0.32} &	96.76 \std{0.27} & 98.38 \std{0.17} \\
Reptile & 82.78 \std{0.85} &	86.22 \std{0.64} &	88.33 \std{0.54} &	89.79 \std{0.48} &	90.93 \std{0.43} &	94.64 \std{0.32} \\
Protonet & 88.72 \std{0.99} &	92.24 \std{0.70} &	93.95 \std{0.54} &	95.06 \std{0.44} &	95.79 \std{0.38} &	97.82 \std{0.23} \\
Matching Networks & 79.70 \std{0.95} &	84.01 \std{0.78} &	86.78 \std{0.68} &	88.83 \std{0.62} &	90.41 \std{0.56} &	94.71 \std{0.39} \\
MAML + \namee  & \textbf{92.14} \std{0.84} &	\textbf{94.54} \std{0.53} &	\textbf{95.72} \std{0.40} &	\textbf{96.46} \std{0.33} &	\textbf{96.90} \std{0.27} &  \textbf{98.45} \std{0.17} \\
\midrule
\multicolumn{7}{l}{Omniglot 20-way 1-shot}\\
\midrule
MAML  & 84.33 \std{0.40} &	85.86 \std{0.29} &	86.92 \std{0.26} &	87.73 \std{0.24} &	88.42 \std{0.22} & 91.28 \std{0.22} \\
Reptile & 83.13 \std{0.42} &	84.71 \std{0.31} &	85.77 \std{0.26} &	86.60 \std{0.24} &	87.30 \std{0.23} &	90.09 \std{0.22} \\
Protonet & \textbf{87.19} \std{0.33} &	\textbf{88.71} \std{0.27} &	\textbf{89.73} \std{0.24} &	\textbf{90.54} \std{0.23} &	\textbf{91.20} \std{0.22} &	\textbf{93.72} \std{0.20} \\
Matching Networks & 62.82 \std{0.60} &	65.50 \std{0.48} &	67.25 \std{0.42} &	68.61 \std{0.39} &	69.75 \std{0.37} &	74.62 \std{0.38} \\
MAML + \namee  & 86.61 \std{0.36} &	88.09 \std{0.28} &	89.09 \std{0.24} &	89.87 \std{0.23} &	90.50 \std{0.21} & 93.01 \std{0.20} \\

\midrule
\multicolumn{7}{l}{\textit{mini}ImageNet 5-way 1-shot}\\
\midrule
MAML  & 30.94 \std{0.70} &	34.52 \std{0.62} &	36.93 \std{0.57} &	38.94 \std{0.55} &	40.68 \std{0.53} & 48.86 \std{0.62} \\
Reptile & 25.37 \std{0.74} &	28.59 \std{0.59} &	30.71 \std{0.52} &	32.52 \std{0.50} &	34.11 \std{0.48} &	41.42 \std{0.56} \\
Protonet & 30.93 \std{0.76} &	34.62 \std{0.65} &	37.06 \std{0.58} &	38.94 \std{0.54} &	40.66 \std{0.52} &	48.56 \std{0.60} \\
Matching Networks & 27.19 \std{0.68} &	30.42 \std{0.57} &	32.64 \std{0.52} &	34.45 \std{0.50} &	36.10 \std{0.49} &	43.84 \std{0.58} \\
MAML + \namee  & \textbf{33.33} \std{0.90} &	\textbf{36.67} \std{0.65} &	\textbf{39.12} \std{0.59} &	\textbf{41.20} \std{0.56} &	\textbf{42.96} \std{0.55} & \textbf{51.21} \std{0.63}\\

\bottomrule
\end{tabular}}
\label{tab:meta_learning}

\end{table}

\subsection{DomainBed}
\label{domain_bed_additional_results}

\begin{table}[]
\caption{Results on DomainBed (Model selection: training-domain validation set): The bottom partition shows results of our method with \namel\ loss. In both cases, with (top) and without (bottom) fixed linear layer, the proposed approach outperforms existing methods, as well as SOTA.}
\vskip 0.15in
\resizebox{\linewidth}{!}{
\begin{tabular}{lccccl}
\toprule
\textbf{Algorithm} &
 \data{PACS} &
  \data{VLCS} &
  \data{OfficeHome} &
  \data{DomainNet} &
  \textbf{Avg.} \\
\midrule
ERM   \cite{gulrajani2020search}   & 85.5   \std{0.1}       & 77.5 \std{0.4}          & 66.5 \std{0.2}         & 40.9 \std{0.1}               & 67.6   \\
IRM \cite{arjovsky2019invariant} & 83.5 \std{0.8}          & 78.5 \std{0.5}          & 64.3 \std{2.2}           & 33.9 \std{2.8}          & 65.1          \\
GroupDRO \cite{sagawa2019distributionally} & 84.4 \std{0.8}  & 76.7 \std{0.6}       & 66.0 \std{0.7}            & 33.3 \std{0.2} & {65.1} \\
Mixup  \cite{yan2020improve} & 84.6 \std{0.6}    &  77.4 \std{0.6}       & 68.1 \std{0.3}               & 39.2 \std{0.1} & {67.33} \\
MLDG   \cite{li2018learning}  & 84.9 \std{1.0}    & 77.2 \std{0.4}       & 66.8 \std{0.6}         & 41.2 \std{0.1} & {67.53} \\
CORAL \cite{sun2016deep} & 86.2\std{0.3}          & 78.8\std{0.6}          & 68.7\std{0.3}      & 41.5 \std{0.1}          & 68.8 \\
MMD   \cite{li2018domain}   & 84.6 \std{0.5}    & 77.5 \std{0.9}              & 66.3 \std{0.1}        & 23.4 \std{9.5} & {62.95}\\
DANN   \cite{ganin2016domain}  & 83.6 \std{0.4}   & 78.6 \std{0.4}                & 65.9 \std{0.6}            & 38.3 \std{0.1} & 66.6 \\
CDANN \cite{li2018deep}     & 82.6 \std{0.9}     & 77.5 \std{0.1}              & 65.8 \std{1.3}             & 38.3 \std{0.3} & 66.05 \\
MTL     \cite{blanchard2021domain}    & 84.6 \std{0.5}     & 77.2 \std{0.4}          & 66.4 \std{0.5}     & 40.6 \std{0.1} & {67.2}\\
SagNet \cite{nam2021reducing}        & 86.3 \std{0.2}    & 77.8 \std{0.5}           & 68.1 \std{0.1}             & 40.3 \std{0.1} & {68.13}\\
ARM   \cite{zhang2021adaptive}     & 85.1 \std{0.4}    & 77.6 \std{0.3}            & 64.8 \std{0.3}      & 35.5 \std{0.2} & {65.75}\\
VREx   \cite{krueger2021out}     & 84.9 \std{0.6}    & 78.3 \std{0.2}            & 66.4 \std{0.6}      & 33.6 \std{2.9} & 65.8 \\
RSC     \cite{huang2020self}    & 85.2 \std{0.9}    & 77.1 \std{0.5}            & 65.5 \std{0.9}           & 38.9 \std{0.5} & {66.68}\\
MIRO \cite{cha2022domain} & 85.4 \std{0.4}          & {79.0 \std{0.0}} & {70.5 \std{0.4}}  & {44.3 \std{0.2}} & {69.8}  \\
\midrule
\multicolumn{6}{l}{ERM + FRR-L}\\
\midrule
Default \cite{addepalli2022learning} & 85.7 \std{0.1}  & 76.6 \std{0.2} & 68.4 \std{0.2} & 44.2 \std{0.1} &  68.73  \\
\namet~\textbf{(Ours)} & 87.2 \std{0.3}  & 78.6 \std{0.3} & 69.4 \std{0.2} & 45.8 \std{0.0} & 70.25 \\
\namee~\textbf{(Ours)} &  \textbf{87.6} \std{0.3} & 78.6 \std{0.3} & \textbf{69.8} \std{0.2} & \textbf{46.0} \std{0.0} & \textbf{70.48} \\
 \midrule
\multicolumn{6}{l}{ERM + FRR}\\
\midrule
 Default \cite{addepalli2022learning} & 87.5 \std{0.1} & 77.6 \std{0.3} & 69.4 \std{0.1}  & 45.1 \std{0.1}  & 69.9
 \\
\namet~\textbf{(Ours)} & 87.6 \std{0.3}  & 78.1 \std{0.1} & \textbf{69.9} \std{0.1} & 45.8 \std{0.0} & 70.35 \\
 \namee~\textbf{(Ours)} & \textbf{88.2} \std{0.2} & 78.6 \std{0.3} & 69.8 \std{0.2}& 45.8 \std{0.0} & \textbf{70.6}\\

\bottomrule
\end{tabular}}
\label{tab:domain_bed_additional}

\end{table}

\begin{table}[hbt!]
\centering
\small
\renewcommand{\arraystretch}{1.1}
\caption{\small\textbf{Out-of-domain accuracies (\%) on} \data{PACS}\textbf{.}}
\vskip 0.15in
\begin{tabular}{lllllc}
\toprule
\textbf{Algorithm} & \textbf{A} & \textbf{C} & \textbf{P} & \textbf{S} & \textbf{Avg} \\
\midrule
CDANN & 84.6 \std{1.8} & 75.5 \std{0.9} & 96.8 \std{0.3} & 73.5 \std{0.6} & 82.6 \\
MASF & 82.9 & 80.5 & 95.0 & 72.3 & 82.7 \\
DMG & 82.6 & 78.1 & 94.5 & 78.3 & 83.4 \\
IRM & 84.8 \std{1.3} & 76.4 \std{1.1} & 96.7 \std{0.6} & 76.1 \std{1.0} & 83.5 \\
MetaReg & 87.2 & 79.2 & 97.6 & 70.3 & 83.6 \\
DANN & 86.4 \std{0.8} & 77.4 \std{0.8} & 97.3 \std{0.4} & 73.5 \std{2.3} & 83.7 \\
GroupDRO & 83.5 \std{0.9} & 79.1 \std{0.6} & 96.7 \std{0.3} & 78.3 \std{2.0} & 84.4 \\
MTL & 87.5 \std{0.8} & 77.1 \std{0.5} & 96.4 \std{0.8} & 77.3 \std{1.8} & 84.6 \\
I-Mixup & 86.1 \std{0.5} & 78.9 \std{0.8} & 97.6 \std{0.1} & 75.8 \std{1.8} & 84.6 \\
MMD & 86.1 \std{1.4} & 79.4 \std{0.9} & 96.6 \std{0.2} & 76.5 \std{0.5} & 84.7 \\
VREx & 86.0 \std{1.6} & 79.1 \std{0.6} & 96.9 \std{0.5} & 77.7 \std{1.7} & 84.9 \\
MLDG & 85.5 \std{1.4} & 80.1 \std{1.7} & 97.4 \std{0.3} & 76.6 \std{1.1} & 84.9 \\
ARM & 86.8 \std{0.6} & 76.8 \std{0.5} & 97.4 \std{0.3} & 79.3 \std{1.2} & 85.1 \\
RSC & 85.4 \std{0.8} & 79.7 \std{1.8} & 97.6 \std{0.3} & 78.2 \std{1.2} & 85.2 \\
Mixstyle & 86.8 \std{0.5} & 79.0 \std{1.4} & 96.6 \std{0.1} & 78.5 \std{2.3} & 85.2 \\
ER & 87.5 & 79.3 & 98.3 & 76.3 & 85.3 \\
pAdaIN & 85.8 & 81.1 & 97.2 & 77.4 & 85.4 \\
ERM & 84.7 \std{0.4} & 80.8 \std{0.6} & 97.2 \std{0.3} & 79.3 \std{1.0} & 85.5 \\
EISNet & 86.6 & 81.5 & 97.1 & 78.1 & 85.8 \\
CORAL & 88.3 \std{0.2} & 80.0 \std{0.5} & 97.5 \std{0.3} & 78.8 \std{1.3} & 86.2 \\
SagNet & 87.4 \std{1.0} & 80.7 \std{0.6} & 97.1 \std{0.1} & 80.0 \std{0.4} & 86.3 \\
DSON & 87.0 & 80.6 & 96.0 & 82.9 & 86.6 \\
 \midrule
\multicolumn{6}{l}{ERM + FRR-L}\\
\midrule
Default & 83.2 \std{0.3} & 79.8 \std{0.4} & 95.9 \std{0.3} & 83.5 \std{0.4} & 85.7 \\
\namet & 88.7 \std{0.5}  & 83.0 \std{0.5}   &    97.8 \std{0.1}      &   79.4  \std{1.0}  & 87.2 \\
\namee & 88.4 \std{0.3} & 83.3 \std{0.8} & 97.5 \std{0.3} & 81.1  \std{0.5} & \textbf{87.6}\\
 \midrule
\multicolumn{6}{l}{ERM + FRR}\\
\midrule
Default & 86.8 \std{0.3} & 82.2 \std{0.4} & 96.4 \std{0.1} & 84.5 \std{0.2} & 87.5 \\
\namet &   87.7 \std{0.8} &  84.0 \std{0.6}   &  97.6 \std{0.1}    &   81.2 \std{0.5}   & 87.6 \\
\namee & 88.8 \std{0.3} &  84.0 \std{0.8}  &  97.7 \std{0.1}  &  82.4 \std{0.6} & \textbf{88.2}\\
\bottomrule
\end{tabular}
\label{pacs_overall}
\end{table}

\begin{table}[hbt!]
\centering
\small
\renewcommand{\arraystretch}{1.1}
\caption{\small\textbf{Out-of-domain accuracies (\%) on} \data{VLCS}\textbf{.}}
\vskip 0.15in
\begin{tabular}{lllllc}
\toprule
\textbf{Algorithm} & \textbf{C} & \textbf{L} & \textbf{S} & \textbf{V} & \textbf{Avg} \\
\midrule
GroupDRO & 97.3 \std{0.3} & 63.4 \std{0.9} & 69.5 \std{0.8} & 76.7 \std{0.7} & 76.7 \\
RSC & 97.9 \std{0.1} & 62.5 \std{0.7} & 72.3 \std{1.2} & 75.6 \std{0.8} & 77.1 \\
MLDG & 97.4 \std{0.2} & 65.2 \std{0.7} & 71.0 \std{1.4} & 75.3 \std{1.0} & 77.2 \\
MTL & 97.8 \std{0.4} & 64.3 \std{0.3} & 71.5 \std{0.7} & 75.3 \std{1.7} & 77.2 \\
I-Mixup & 98.3 \std{0.6} & 64.8 \std{1.0} & 72.1 \std{0.5} & 74.3 \std{0.8} & 77.4 \\
ERM & 97.7 \std{0.4} & 64.3 \std{0.9} & 73.4 \std{0.5} & 74.6 \std{1.3} & 77.5 \\
MMD & 97.7 \std{0.1} & 64.0 \std{1.1} & 72.8 \std{0.2} & 75.3 \std{3.3} & 77.5 \\
CDANN & 97.1 \std{0.3} & 65.1 \std{1.2} & 70.7 \std{0.8} & 77.1 \std{1.5} & 77.5 \\
ARM & 98.7 \std{0.2} & 63.6 \std{0.7} & 71.3 \std{1.2} & 76.7 \std{0.6} & 77.6 \\
SagNet & 97.9 \std{0.4} & 64.5 \std{0.5} & 71.4 \std{1.3} & 77.5 \std{0.5} & 77.8 \\
Mixstyle & 98.6 \std{0.3} & 64.5 \std{1.1} & 72.6 \std{0.5} & 75.7 \std{1.7} & 77.9 \\
VREx & 98.4 \std{0.3} & 64.4 \std{1.4} & 74.1 \std{0.4} & 76.2 \std{1.3} & 78.3 \\
IRM & 98.6 \std{0.1} & 64.9 \std{0.9} & 73.4 \std{0.6} & 77.3 \std{0.9} & 78.6 \\
DANN & 99.0 \std{0.3} & 65.1 \std{1.4} & 73.1 \std{0.3} & 77.2 \std{0.6} & 78.6 \\
CORAL & 98.3 \std{0.1} & 66.1 \std{1.2} & 73.4 \std{0.3} & 77.5 \std{1.2} & 78.8 \\
 \midrule
\multicolumn{6}{l}{ERM + FRR-L}\\
\midrule
Default & 97.1 \std{0.2} & 63.3 \std{0.3} & 72.0 \std{0.3} & 74.3 \std{0.3} & 76.6 \\
\namet &  98.8     \std{0.1}    &   64.8  \std{0.2}  &  73.9   \std{0.2}   &   77.0 \std{1.1}    & 78.6 \\
\namee & 98.9 \std{0} & 64.9 \std{0.4} & 73.2 \std{0.4} &  77.5 \std{0.6} & 78.6 \\
 \midrule
\multicolumn{6}{l}{ERM + FRR}\\
\midrule
Default & 96.7 \std{0.6} & 65.2 \std{0.8} & 73.4 \std{0.1} & 75.6 \std{0.4} & 77.6 \\
\namet & 98.3 \std{0.1} & 64.5 \std{0.2} & 72.3 \std{0.1} & 77.2 \std{0.3} & 78.1 \\
\namee & 97.1 \std{0.5}& 65.4 \std{0.8} & 74.3 \std{0.1} & 77.5 \std{0.3} & 78.6 \\
\bottomrule
\end{tabular}
\label{vlcs_overall}
\end{table}

\begin{table}[hbt!]
\centering
\small
\renewcommand{\arraystretch}{1.1}
\caption{\small\textbf{Out-of-domain accuracies (\%) on} \data{OfficeHome}\textbf{.}}
\vskip 0.15in
\begin{tabular}{lllllc}
\toprule
\textbf{Algorithm} & \textbf{A} & \textbf{C} & \textbf{P} & \textbf{R} & \textbf{Avg} \\
\midrule
Mixstyle & 51.1 \std{0.3} & 53.2 \std{0.4} & 68.2 \std{0.7} & 69.2 \std{0.6} & 60.4 \\
IRM & 58.9 \std{2.3} & 52.2 \std{1.6} & 72.1 \std{2.9} & 74.0 \std{2.5} & 64.3 \\
ARM & 58.9 \std{0.8} & 51.0 \std{0.5} & 74.1 \std{0.1} & 75.2 \std{0.3} & 64.8 \\
RSC & 60.7 \std{1.4} & 51.4 \std{0.3} & 74.8 \std{1.1} & 75.1 \std{1.3} & 65.5 \\
CDANN & 61.5 \std{1.4} & 50.4 \std{2.4} & 74.4 \std{0.9} & 76.6 \std{0.8} & 65.7 \\
DANN & 59.9 \std{1.3} & 53.0 \std{0.3} & 73.6 \std{0.7} & 76.9 \std{0.5} & 65.9 \\
GroupDRO & 60.4 \std{0.7} & 52.7 \std{1.0} & 75.0 \std{0.7} & 76.0 \std{0.7} & 66.0 \\
MMD & 60.4 \std{0.2} & 53.3 \std{0.3} & 74.3 \std{0.1} & 77.4 \std{0.6} & 66.4 \\
MTL & 61.5 \std{0.7} & 52.4 \std{0.6} & 74.9 \std{0.4} & 76.8 \std{0.4} & 66.4 \\
VREx & 60.7 \std{0.9} & 53.0 \std{0.9} & 75.3 \std{0.1} & 76.6 \std{0.5} & 66.4 \\
ERM & 61.3 \std{0.7} & 52.4 \std{0.3} & 75.8 \std{0.1} & 76.6 \std{0.3} & 66.5 \\
MLDG & 61.5 \std{0.9} & 53.2 \std{0.6} & 75.0 \std{1.2} & 77.5 \std{0.4} & 66.8 \\
I-Mixup & 62.4 \std{0.8} & 54.8 \std{0.6} & 76.9 \std{0.3} & 78.3 \std{0.2} & 68.1 \\
SagNet & 63.4 \std{0.2} & 54.8 \std{0.4} & 75.8 \std{0.4} & 78.3 \std{0.3} & 68.1 \\
CORAL & 65.3 \std{0.4} & 54.4 \std{0.5} & 76.5 \std{0.1} & 78.4 \std{0.5} & 68.7 \\
 \midrule
\multicolumn{6}{l}{ERM + FRR-L}\\
\midrule
Default & 64.4 \std{0.1} & 55.6 \std{0.5} & 76.5 \std{0.2} & 77.5 \std{0.2} & 68.4 \\
\namet & 64.2  \std{0.3}   &  55.9  \std{0.5}    &   77.6   \std{0.2}  & 79.9  \std{0.3}   & 69.4 \\
\namee &  64.5 \std{0.3} & 56.9 \std{0.5} & 77.8 \std{0.3} & 80.0 \std{0.4} & \textbf{69.8}\\
 \midrule
\multicolumn{6}{l}{ERM + FRR}\\
\midrule
Default & 64.5 \std{0.2} & 58.4 \std{0.1} & 76.6 \std{0.3} & 78.3 \std{0.1} & 69.4 \\
\namet &  65.6  \std{0.3} &    57.1 \std{0.3}   & 76.8 \std{0.3} & 80.2 \std{0.2} & \textbf{69.9} \\
\namee & 65.6 \std{0.5} & 56.9 \std{0.3} & 76.9 \std{0.1}  & 79.7 \std{0.3}  & 69.8 \\
\bottomrule
\end{tabular}
\label{office_overall}
\end{table}

\begin{table}[hbt!]
\centering
\small
\renewcommand{\arraystretch}{1.1}
\caption{\small\textbf{Out-of-domain accuracies (\%) on} \data{DomainNet}\textbf{.}}
\vskip 0.15in
\begin{tabular}{lllllllc}
\toprule
\textbf{Algorithm} & \textbf{clip} & \textbf{info} & \textbf{paint} & \textbf{quick} & \textbf{real} & \textbf{sketch} & \textbf{Avg} \\
\midrule
MMD & 32.1 \std{13.3} & 11.0 \std{4.6} & 26.8 \std{11.3} & 8.7 \std{2.1} & 32.7 \std{13.8} & 28.9 \std{11.9} & 23.4 \\
GroupDRO & 47.2 \std{0.5} & 17.5 \std{0.4} & 33.8 \std{0.5} & 9.3 \std{0.3} & 51.6 \std{0.4} & 40.1 \std{0.6} & 33.3 \\
VREx & 47.3 \std{3.5} & 16.0 \std{1.5} & 35.8 \std{4.6} & 10.9 \std{0.3} & 49.6 \std{4.9} & 42.0 \std{3.0} & 33.6 \\
IRM & 48.5 \std{2.8} & 15.0 \std{1.5} & 38.3 \std{4.3} & 10.9 \std{0.5} & 48.2 \std{5.2} & 42.3 \std{3.1} & 33.9 \\
Mixstyle & 51.9 \std{0.4} & 13.3 \std{0.2} & 37.0 \std{0.5} & 12.3 \std{0.1} & 46.1 \std{0.3} & 43.4 \std{0.4} & 34.0 \\
ARM & 49.7 \std{0.3} & 16.3 \std{0.5} & 40.9 \std{1.1} & 9.4 \std{0.1} & 53.4 \std{0.4} & 43.5 \std{0.4} & 35.5 \\
CDANN & 54.6 \std{0.4} & 17.3 \std{0.1} & 43.7 \std{0.9} & 12.1 \std{0.7} & 56.2 \std{0.4} & 45.9 \std{0.5} & 38.3 \\
DANN & 53.1 \std{0.2} & 18.3 \std{0.1} & 44.2 \std{0.7} & 11.8 \std{0.1} & 55.5 \std{0.4} & 46.8 \std{0.6} & 38.3 \\
RSC & 55.0 \std{1.2} & 18.3 \std{0.5} & 44.4 \std{0.6} & 12.2 \std{0.2} & 55.7 \std{0.7} & 47.8 \std{0.9} & 38.9 \\
I-Mixup & 55.7 \std{0.3} & 18.5 \std{0.5} & 44.3 \std{0.5} & 12.5 \std{0.4} & 55.8 \std{0.3} & 48.2 \std{0.5} & 39.2 \\
SagNet & 57.7 \std{0.3} & 19.0 \std{0.2} & 45.3 \std{0.3} & 12.7 \std{0.5} & 58.1 \std{0.5} & 48.8 \std{0.2} & 40.3 \\
MTL & 57.9 \std{0.5} & 18.5 \std{0.4} & 46.0 \std{0.1} & 12.5 \std{0.1} & 59.5 \std{0.3} & 49.2 \std{0.1} & 40.6 \\
ERM & 58.1 \std{0.3} & 18.8 \std{0.3} & 46.7 \std{0.3} & 12.2 \std{0.4} & 59.6 \std{0.1} & 49.8 \std{0.4} & 40.9 \\
MLDG & 59.1 \std{0.2} & 19.1 \std{0.3} & 45.8 \std{0.7} & 13.4 \std{0.3} & 59.6 \std{0.2} & 50.2 \std{0.4} & 41.2 \\
CORAL & 59.2 \std{0.1} & 19.7 \std{0.2} & 46.6 \std{0.3} & 13.4 \std{0.4} & 59.8 \std{0.2} & 50.1 \std{0.6} & 41.5 \\
MetaReg & 59.8 & 25.6 & 50.2 & 11.5 & 64.6 & 50.1 & 43.6 \\
DMG & 65.2 & 22.2 & 50.0 & 15.7 & 59.6 & 49.0 & 43.6 \\
 \midrule
\multicolumn{6}{l}{ERM + FRR-L}\\
\midrule
Default & 63.6 \std{0.1} & 20.5 \std{0.0} & 50.7 \std{0.0} & 14.6 \std{0.1} & 63.8 \std{0.1} & 53.4 \std{0.0} & 44.2 \\
\namet &65.7  \std{0.1}   &  21.9 \std{0.0} &  52.0  \std{0.1}  &  15.1 \std{0.1}   & 65.2 \std{0.1}  & 54.9 \std{0.1}  & 45.8 \\
\namee &  65.8 \std{0.1} & 22.1 \std{0.0} & 52.3 \std{0.1}  & 15.1 \std{0.1} & 65.7 \std{0.0} & 54.8 \std{0.1} & \textbf{46.0}\\
 \midrule
\multicolumn{6}{l}{ERM + FRR}\\
\midrule
Default & 64.3 \std{0.1} & 21.2 \std{0.3} & 51.1 \std{0.2} & 14.9 \std{0.6} & 64.7 \std{0.1} & 54.1 \std{0.2} & 45.1 \\
\namet & 65.6 \std{0.0} & 21.9 \std{0.0}  & 52.0 \std{0.1} & 15.0 \std{0.1}  & 65.5 \std{0.0} & 54.8 \std{0.1}
 & 45.8\\
\namee & 65.6 \std{0.0} & 21.5 \std{0.0} & 52.1 \std{0.0} & 15.0 \std{0.0}  & 65.7 \std{0.0} &  55.1 \std{0.0} & 45.8\\

\bottomrule
\end{tabular}
\label{domainnet_overall}
\end{table}

\subsubsection{DomainBed Benchmark}
\label{domainbed_dataset}

In this section, we describe the DomainBed benchmark, a challenging benchmark used to study the out-of-domain generalization capabilities of our model. To briefly explain, consider the dataset \data{PACS}, which consists of Photos, Art, cartoons, and sketches of the same set of classes (for instance, dogs and cats, amongst others). The goal of the task is to learn from three of these domains and evaluate the performance of the left-out domain (similar to a k-fold cross-validation). By doing so, we can assess the out-of-domain generalization performance of our models. In general, the metric used in this domain involves taking an average of the performance of the different k-fold splits. More information about this benchmark is available at \citet{gulrajani2020search}.

\subsubsection{Additional Results}
In this section, we briefly discuss additional results from our DomainBed experiments. Table~\ref{tab:domain_bed_additional} depicts a complete table and comparison of our proposed approach to a multitude of state-of-the-art approaches in this field. Furthermore, we also show that our proposed approach outperforms previous SOTA by \textcolor{blue}{\textbf{+0.7\%}}. 

Moreover, we also report the performance improvements when \namel~is trained with model weight averaging methods such as SWAD \cite{cha2021swad}. Table~\ref{tab:domain_bed_swad} depicts the performance improvements of \namel~over SWAD.

\begin{table*}[hbt!]
\caption{Results on DomainBed (Model selection: training-domain validation set) on the model weight averaging models such as SWAD~\cite{cha2021swad}: The bottom partition shows results of our method with \namel\ loss. }
\vskip 0.15in
\centering
\Huge
\resizebox{0.7\linewidth}{!}{
\begin{tabular}{lccccl}
\toprule
\textbf{Algorithm} &
 \data{PACS} &
  \data{VLCS} &
  \data{OfficeHome} &
  \data{DomainNet} &
  \textbf{Avg.} \\
\midrule
\multicolumn{6}{l}{SWAD}\\
\midrule
 Default  \cite{cha2021swad} & 86.5 \stdlarge{1.0} & \textbf{76.0} \stdlarge{0.7} & 66.3 \stdlarge{0.2}  & 43.8 \stdlarge{0.1}  & 68.15
 \\
\namee~\textbf{(Ours)} & \textbf{87.6} \stdlarge{0.2} & 75.4 \stdlarge{1.1} & \textbf{67.5} \stdlarge{0.3}& \textbf{44.0} \stdlarge{0.1} & \textbf{68.63}\\
\bottomrule
\end{tabular}}
\label{tab:domain_bed_swad}
\vspace{-0.2in}
\end{table*}

Furthermore, we also present the per-environment breakdown of our approach in various datasets in Table~\ref{pacs_overall}, Table~\ref{vlcs_overall}, Table~\ref{office_overall}, and Table~\ref{domainnet_overall} for PACS, VLCS, OfficeHome, and DomainNet respectively. The performance metrics of the baseline approaches were taken from \cite{gulrajani2020search}.

\subsection{Convergence of \namel~and additional costs}

\paragraph{Convergence in extreme class imbalance setting:} In the extreme class imbalance setting, we note that uniform sampling for mini-batch generation + \namel~re-weighting (as done in Algorithm~\ref{alg:algorithm_overall}) would be slower to converge than using importance sampling for mini-batch generation. This is because the former tends to have higher variance. But this is easily fixable in Algorithm~\ref{alg:algorithm_overall}. We simply update the mini-batch generation step with importance sampling; that is, we select point ‘$i$’ with probability proportional to its weight $\exp(\ell_i)$ (instead of uniform sampling that is currently done). The main reason for not considering this in this work is our desire to illustrate the generality of our approach and its applicability to a wide variety of learning tasks, without focusing too much on the class imbalance task. We believe this generality and simplicity is what makes our method quite attractive to the practitioner as showcased in some of experiments including Natural Language Processing, Image Classification, Tabular Classification, Distribution Shifts, and Meta-learning. Furthermore, Figure~\ref{fig:convergence} illustrates the convergence plots of miniGPT pre-training and ViT-S on Imagenet-1K. Overall, we note similar stable training convergence on both, while \namel~is able to focus more heavily on harder samples and reach a better minima. 

\paragraph{Additional costs of \namel:} 
Furthermore, we note that \namel~poses \textbf{NO} additional cost over standard approaches. The approach is a simple modification of the loss with a closed-form function with $\mathcal{O}(1)$ complexity, and without any changes in architecture, training regime, etc.

\begin{figure*}[hbt!]
\centering
\subfigure[Convergence on miniGPT pretraining]{%
\label{fig:mingpt_conv}
\includegraphics[width=0.47\linewidth]{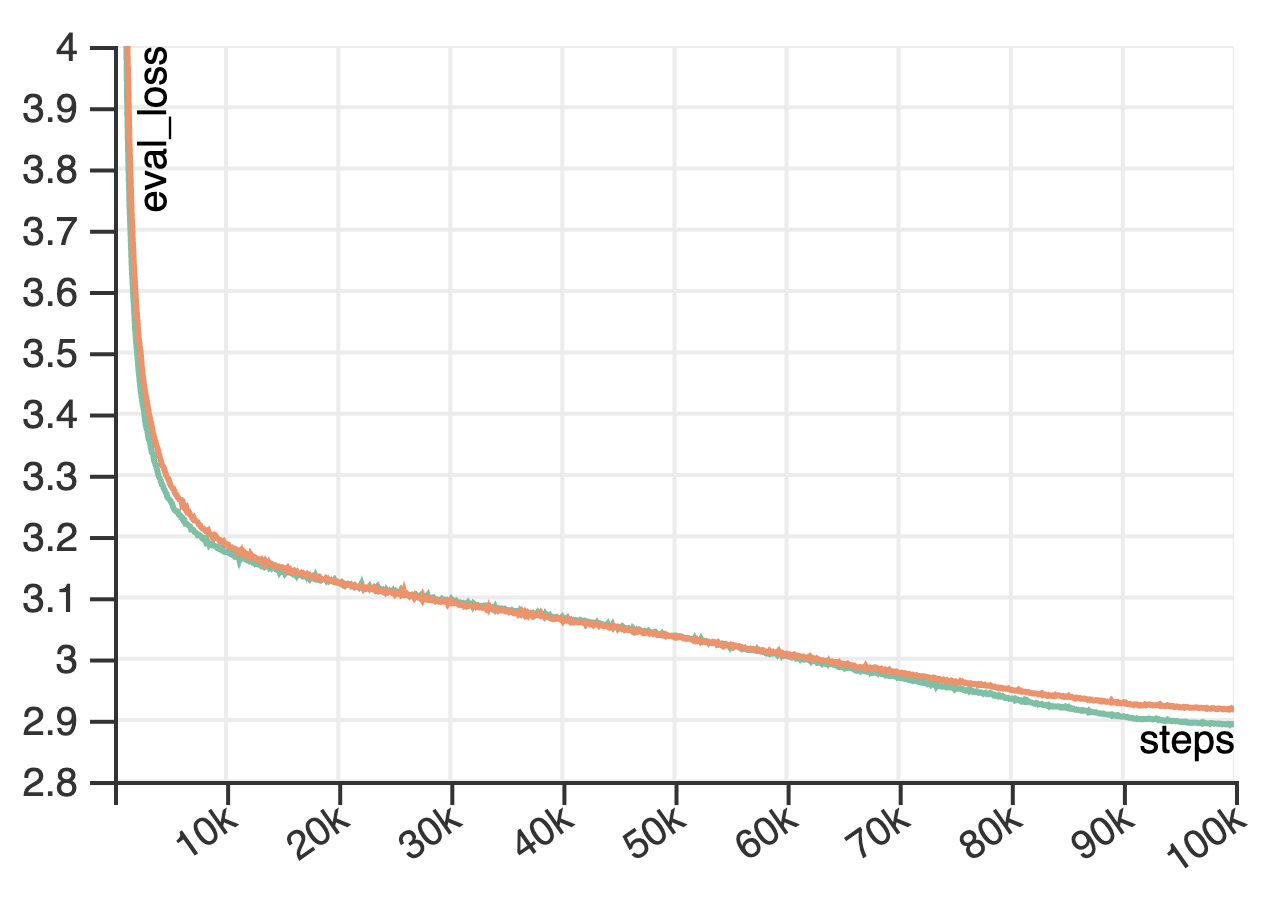}}
\quad
\subfigure[Convergence on Imagenet-1K with VIT]{%
\label{fig:vit_conv}
\includegraphics[width=0.47\linewidth]{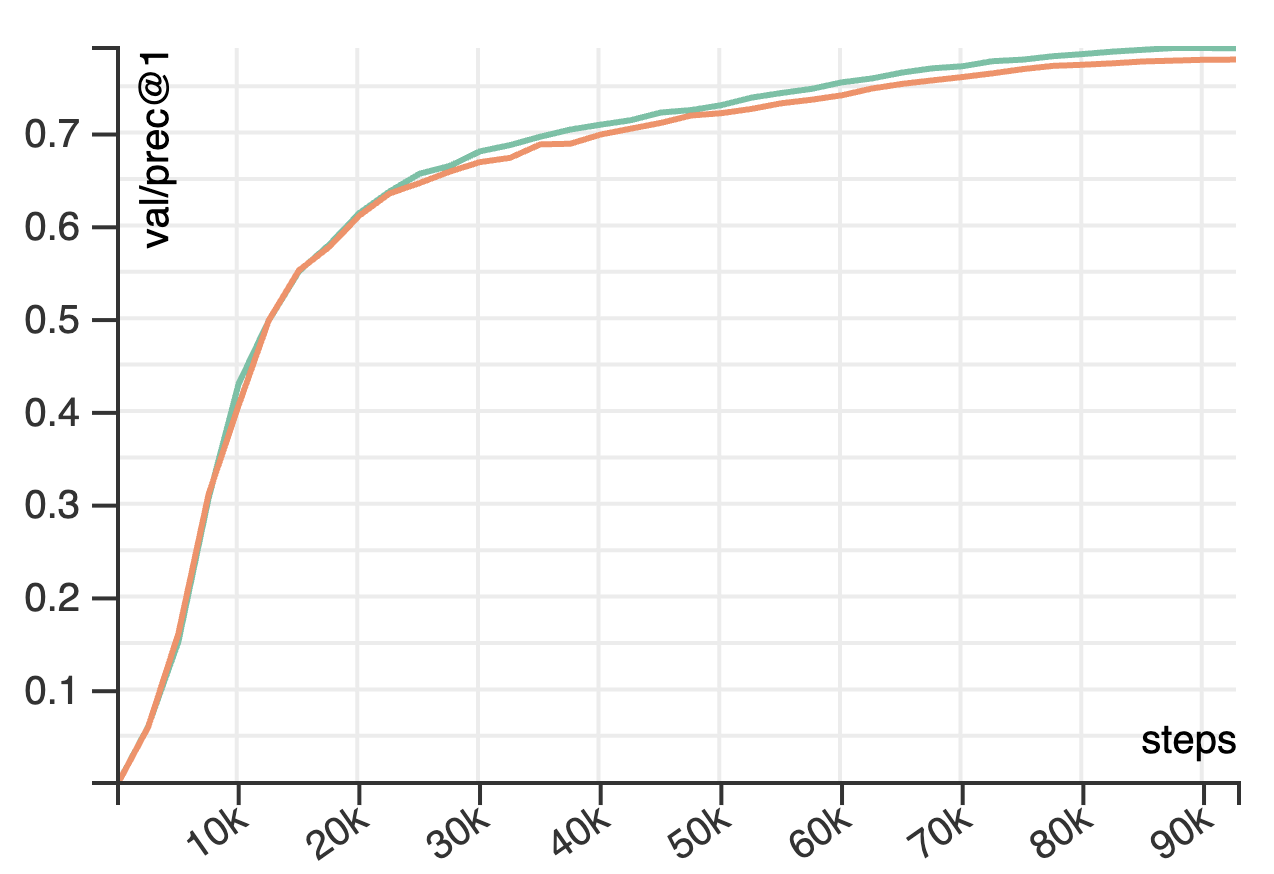}}

\caption{Convergence plots of RGD and Default training regime on real-world datasets. Here the \textcolor{orange}{orange} line denotes the default training regime, and the \textcolor{green}{green} line denotes RGD.}
\label{fig:convergence}
\end{figure*}

\end{document}